\theoremstyle{definition}
\newtheorem{theorem}{Theorem}[section]
\newtheorem{corollary}[theorem]{Corollary}
\newtheorem{lemma}[theorem]{Lemma}
\newtheorem{proposition}[theorem]{Proposition}
\newtheorem{definition}[theorem]{Definition}
\newtheorem{remark}[theorem]{Remark}
\theoremstyle{definition}
\newtheorem{example}[theorem]{Example}
\newcommand{\N}{\mathbb{N}}
\newcommand{\R}{\mathbb{R}}
\newcommand{\E}{\mathbb{E}}
\let\P\BP
\let\hat\widehat
\newcommand{\f}{\frac}
\newcommand{\eps}{\varepsilon}
\renewcommand{\r}{\right}
\renewcommand{\l}{\left}
\newcommand{\pnorm}[2]{\left\|#1\right\|_{#2}}
\newcommand{\norm}[1]{\left\|#1\right\|}
\newcommand{\sip}[2]{\langle #1, #2 \rangle}
\newcommand{\ip}[2]{\left\langle #1, #2 \right\rangle}
\newcommand{\ddx}[1]{\frac{\mathrm{d}}{\mathrm{d} #1}}
\newcommand{\ind}{{\mathbbm{1}}}
\newcommand{\summ}[2]{\sum_{#1 = 1}^{#2}}
\newcommand{\summm}[3]{\sum_{#1 = #2}^{#3}}
\newcommand{\sgn}{\operatorname{sgn}}
\newcommand{\calD}{\mathcal{D}}
\newcommand{\calG}{\mathcal{G}}
\DeclareMathOperator{\poly}{poly}
\newcommand{\iid}{\stackrel{\rm i.i.d.}{\sim}}
\renewcommand*{\eqref}[1]{%
  \hyperref[{#1}]{\textup{\tagform@{\ref*{#1}}}}%
}
\newcommand{\opt}{\mathsf{OPT}}
\numberwithin{equation}{section}
\newcommand{\obj}{F_{\ell}}
\newcommand{\sopt}[1]{\mathsf{OPT}^{-\ell'}}
\newcommand{\optz}{\mathsf{OPT}}
\newcommand{\llp}{H}
\newcommand{\zeroone}{\mathrm{err}_{\calD}^{0-1}}
\title{\huge Agnostic Learning of Halfspaces with Gradient Descent via Soft Margins}
\author
{
    Spencer Frei\thanks{Department of Statistics, University of California, Los Angeles, CA 90095, USA; e-mail: {\tt spencerfrei@ucla.edu}}
    ~~~and~~~
	Yuan Cao\thanks{Department of Computer Science, University of California, Los Angeles, CA 90095, USA; e-mail: {\tt yuancao@cs.ucla.edu}} 
	~~~and~~~
	Quanquan Gu\thanks{Department of Computer Science, University of California, Los Angeles, CA 90095, USA; e-mail: {\tt qgu@cs.ucla.edu}}
}
\date{}
\begin{document}
\maketitle

\begin{abstract}
    We analyze the properties of gradient descent on convex surrogates for the zero-one loss for the agnostic learning of halfspaces.  If $\mathsf{OPT}$ is the best classification error achieved by a halfspace, by appealing to the notion of soft margins we show that gradient descent finds halfspaces with classification error $\tilde O(\mathsf{OPT}^{1/2}) + \varepsilon$ in $\mathrm{poly}(d,1/\varepsilon)$ time and sample complexity for a broad class of distributions that includes log-concave isotropic distributions as a subclass.  To the best of our knowledge, this is the first positive guarantee for the classification error of halfspaces learned by gradient descent using either the binary cross-entropy or hinge loss in the presence of agnostic noise. 
    
    %
\end{abstract}


\section{Introduction}
We analyze the performance of gradient descent on a convex surrogate for the zero-one loss in the context of the agnostic learning of halfspaces.  By a \emph{halfspace} we mean a function $x\mapsto \sgn(w^\top x)\in \{\pm 1\}$ for some $w\in \R^d$.  Let $\calD$ be a joint distribution over $(x,y)$, where the inputs $x\in \R^d$ and the labels $y\in \{\pm 1\}$, and denote by $\calD_x$ the marginal of $\calD$ over $x$.  We are interested in the performance of halfspaces found by gradient descent in comparison to the best-performing halfspace over $\calD$, so let us define, for $w\in \R^d$,
\begin{align*}
\zeroone(w) &:= \P_{(x, y)\sim \calD}( \sgn(w^\top x) \neq y),\\
\optz &:= \min_{\norm w = 1} \zeroone(w).
\end{align*}
Due to the non-convexity and discontinuity of the zero-one loss, the standard approach for minimizing the classification error is to consider a convex surrogate loss $\ell : \R \to \R$ for which $\ind(z< 0) \leq O(\ell(z))$ and to instead minimize the surrogate risk
\begin{equation}
    \obj(w) := \E_{(x,y) \sim \calD} \big[\ell(y w^\top x)\big].
    \label{eq:objective.fcn}
\end{equation}
Without access to the population risk itself, one can take samples $\{(x_i, y_i)\}_{i=1}^n \iid \calD$ and optimize~\eqref{eq:objective.fcn} by gradient descent on the empirical risk $\hat \obj(w)$, defined by taking the expectation in \eqref{eq:objective.fcn} over the empirical distribution of the samples.  By using standard tools from convex optimization and Rademacher complexity, such an approach is guaranteed to efficiently minimize the population surrogate risk up to optimization and statistical error.  The question is then, given that we have found a halfspace $x\mapsto w^\top x$ that minimizes the \emph{surrogate} risk, how does this halfspace compare to the \emph{best} halfspace as measured by the zero-one loss?   And how does the choice of the surrogate loss affect this behavior?  To the best of our knowledge, no previous work has been able to demonstrate that gradient descent on convex surrogates can yield approximate minimizers for the classification error over halfspaces, even for the case of the standard logistic (binary cross-entropy) loss $\ell(z) = \log(1+\exp(-z))$ or the hinge loss $\ell(z) = \max(1-z, 0)$. 

We show below that the answer to these questions depend upon what we refer to as the \emph{soft margin function} of the distribution at a given minimizer for the zero-one loss.  (We note that in general, there may be multiple minimizers for the zero-one loss, and so we can only refer to \emph{a} given minimizer.)  For $\bar v\in \R^d$ satisfying $\norm {\bar v}=1$, we say that the halfspace $\bar v$ satisfies the $\phi_{\bar v}$-soft-margin property if for some function $\phi_{\bar v}: [0,1]\to \R$, for all $\gamma \in [0,1]$,
\begin{equation}\nonumber
    \P_{\calD_x}( |\bar v^\top x| \leq \gamma)\leq \phi_v(\gamma).
    \label{eq:softmargin.function}
\end{equation}
To get a flavor for how this soft margin can be used to show that gradient descent finds approximately optimal halfspaces, for bounded distributions $\calD_x$, we show in Theorem \ref{thm:bounded} below that with high probability,
\begin{align*} 
\zeroone(w_T) &\leq  \inf_{\gamma \in (0,1)} \Big\{ O( \gamma^{-1}  \optz) +  \phi_{\bar v}(\gamma)  + O( \gamma^{-1} n^{-1/2}) + \eps\Big\},
\end{align*}
where $\phi_{\bar v}$ is a soft margin function corresponding to a unit norm minimizer $\bar v$ of the population zero-one loss.  Thus, by analyzing the properties of $\phi_{\bar v}$, one can immediately derive approximate agnostic learning results for the output of gradient descent.   In particular, we are able to show the following guarantees for the output of gradient descent:
\begin{enumerate}
    \item \textbf{Hard margin distributions}.  If $\norm x \leq B_X$ almost surely and there is $\bar \gamma>0$ such that $\bar v^\top x \geq \bar \gamma$ a.s., then $\zeroone(w_{t}) \leq \tilde O(\bar \gamma^{-1} \optz) + \eps$. 
    \item \textbf{Sub-exponential distributions satisfying anti-concentration.}  If random vectors from $\calD_x$ are sub-exponential and satisfy an anti-concentration inequality for projections onto one dimensional subspaces, then $\zeroone(w_t)\leq \tilde O(\optz^{1/2}) + \eps$.  This covers any log-concave isotropic distribution.
\end{enumerate}
For each of our guarantees, the runtime and sample complexity are $\poly(d, \eps^{-1})$.  The exact rates are given in Corollaries \ref{corollary:agnostic.margin.bounded}, \ref{corollary:bounded} and \ref{corollary:log.concave.isotropic}.   In Table \ref{table:results} we compare our results with known lower bounds in the literature.   To the best of our knowledge, our results are the first to show that gradient descent on convex surrogates for the zero-one loss can learn halfspaces in the presence of agnostic label noise, despite the ubiquity of this approach for classification problems.

\begin{table*}[t]
\centering
\caption{Comparison of our results with other upper and lower bounds in the literature.}
\begin{tabular}[t]{p{4.4cm} p{3.3cm} p{2.8cm} p{4.4cm} }
\addlinespace
\toprule
Algorithm  & $\calD_x$ & Population Risk & Known Lower Bound\\
\midrule
Non-convex G.D.\newline \citep{diakonikolas2020nonconvex} & Concentration,\newline anti-concentration & $O(\optz)$ & N/A\\
\addlinespace
Convex G.D.\newline (this paper) & Sub-exponential,\newline anti-concentration & $\tilde O(\optz^{1/2})$ & $\Omega(\optz \log^\alpha (\nicefrac 1 \optz))$\newline \citep{diakonikolas2020nonconvex}\\
\addlinespace
Convex G.D.\newline (this paper) & Hard margin & $\tilde O(\bar \gamma^{-1} \optz) $ & $\Omega(\bar \gamma^{-1} \optz)$\newline \citep{diakonikolas2019massart}\\
\bottomrule
\addlinespace
\end{tabular}
\label{table:results}
\end{table*}


The remainder of the paper is organized as follows.  In Section \ref{sec:related.work}, we review the literature on learning halfspaces in the presence of noise.  In Section \ref{sec:soft.margins}, we discuss the notion of soft margins which will be essential to our proofs, and provide examples of soft margin behavior for different distributions.  In Section \ref{sec:gd.surrogate} we show that gradient descent efficiently finds minimizers of convex surrogate risks and discuss how the tail behavior of the loss function can affect the time and sample complexities of gradient descent.  In Section \ref{sec:gd.zeroone} we provide our main results, which relies upon using soft margins to convert minimizers for the convex surrogate risk to approximate minimizers for the classification error.  We conclude in Section \ref{sec:conclusion}. 

\section{Related Work}\label{sec:related.work}
The problem of learning halfspaces is a classical problem in machine learning with a history almost as long as the history of machine learning itself, starting from the perceptron~\citep{rosenblatt1958perceptron} and support vector machines~\citep{boser1992svm} to today.  Much of the early works on this problem focused on the realizable setting, i.e. where $\optz =0$.  In this setting, the Perceptron algorithm or methods from linear programming can be used to efficiently find the optimal halfspace.   In the setting of agnostic PAC learning~\citep{kearns.agnostic} where $\optz >0$ in general, the question of which distributions can be learned up to classification error $\optz + \eps$, and whether it is possible to do so in $\poly(d,1/\eps)$ time (where $d$ is the input dimension), is significantly more difficult and is still an active area of research.  It is known that without distributional assumptions, learning up to even $O(\optz) +\eps$ is NP-hard, both for proper learning~\citep{guruswami2009hardness} and improper learning~\citep{daniely2016complexity}.  Due to this difficulty, it is common to make a number of assumptions on either $\calD_x$ or to impose some type of structure to the learning problem.

A common structure imposed is that of structured noise: one can assume that there exists some underlying halfspace $y = \sgn(v^\top x)$ that is corrupted with probability $\eta(x)\in [0,\nicefrac 12)$, possibly dependent on the features $x$.  The simplest setting is that of random classification noise, where $\eta(x) \equiv \eta$, so that each label is flipped with the same probability~\citep{angluin1988rcn}; polynomial time algorithms for learning under this noise condition were shown by~\citet{blum1998rcn}.  The Massart noise model introduced by \citet{massart2006noise} relaxes this assumption to $\eta(x)\leq \eta$ for some absolute constant $\eta<1/2$.  The Tsybakov noise model~\citep{tsybakov2004noise} is a generalization of the Massart noise model that instead requires a tail bound on $\P(\eta(x)\geq  \nicefrac 12 - t)$ for $t>0$.~\citet{awasthi2015massart} showed that optimally learning halfspaces under Massart noise is possible for the uniform distribution on the unit sphere, and~\citet{awasthi20161bitcompressednoise} showed this for log-concave isotropic distributions. The recent landmark result of~\citet{diakonikolas2019massart} provided the first distribution-independent result for optimally learning halfspaces under Massart noise, answering a long-standing~\citep{sloan1988} open problem in computational learning. 

By contrast, in the agnostic PAC learning setting, one makes no assumptions on $\eta(x)$, so one can equivalently view agnostic PAC learning as an adversarial noise model in which an adversary can corrupt the label of a sample $x$ with any probability $\eta(x)\in [0,1]$.  Recent work suggests that even when $\calD_x$ is the Gaussian, agnostically learning up to exactly $\optz+\eps$ likely requires $\exp(1/\eps)$ time~\citep{goel2020sqlb,diakonikolas2020sqlb}. In terms of positive results in the agnostic setting,~\citet{kalai08agnostichalfspace} showed that a variant of the \textsf{Average} algorithm~\citep{servidio99average} can achieve risk $O(\optz \sqrt{\log(\nicefrac 1 \optz)})$ risk in $\poly(d,1/\eps)$ time when $\calD_x$ is uniform over the unit sphere.~\citet{awasthi} demonstrated that a localization-based algorithm can achieve $O(\optz)+\eps$ under log-concave isotropic marginals.~\citet{diakonikolas2020nonconvex} showed that for a broad class of distributions, the output of projected SGD on a nonconvex surrogate for the zero-one loss produces a halfspace with risk $O(\optz) + \eps$ in $\poly(d, 1/\eps)$ time.  For more background on learning halfspaces in the presence of noise, we refer the reader to~\citet{balcan2020noise}. 

We note that \citet{diakonikolas2020nonconvex} also showed that the minimizer of the surrogate risk of any \emph{convex} surrogate for the zero-one loss is a halfspace with classification error $\omega(\optz)$. ~\citet{bendavid2012surrogate} and~\citet{awasthi} showed similar lower bounds that together imply that empirical risk minimization procedures for convex surrogates yield halfspaces with classification error $\Omega(\optz)$.  Given such lower bounds, we wish to emphasize that in this paper we are \emph{not} making a claim about the optimality of gradient descent (on convex surrogates) for learning halfspaces.  Rather, our main interest is the characterization of what are the strongest learning guarantees possible with what is perhaps the simplest learning algorithm possible.  Given the success of gradient descent for the learning of deep neural networks, and the numerous questions that this success has brought to the theory of statistics and machine learning, we think it is important to develop a thorough understanding of what are the possibilities of vanilla gradient descent, especially in the simplest setting possible.  

Recent work has shown that gradient descent finds approximate minimizers for the population risk of single neurons $x\mapsto \sigma(w^\top x)$ under the squared loss~\citep{diakonikolas2020relu,frei2020singleneuron}, despite the computational intractability of finding the optimal single neuron~\citep{goel2019relugaussian}.   The main contribution of this paper is that despite the computational difficulties in \emph{exact} agnostic learning, the standard gradient descent algorithm satisfies an \emph{approximate} agnostic PAC learning guarantee, in line with the results found by~\citet{frei2020singleneuron} for the single neuron.

\subsection{Notation}
We say that a differentiable loss function $\ell$ is $L$-Lipschitz if $|\ell'(z)| \leq L$ for all $z$ in its domain, and we say the loss is $H$-smooth if its derivative $\ell'$ is $H$-Lipschitz.  We use the word ``decreasing'' interchangeably with ``non-increasing''.  We use the standard $O(\cdot), \Omega(\cdot)$ order notations to hide universal constants and $\tilde O(\cdot), \tilde \Omega(\cdot)$ to additionally suppress logarithmic factors.  Throughout this paper, $\norm{x}$ refers to the standard Euclidean norm on $\R^d$ induced by the inner product $x^\top x$.  We will emphasize that a vector $v$ is of unit norm by writing $\bar v$.  We assume $\calD$ is a probability distribution over $\R^d \times \{\pm 1\}$ with marginal distribution $\calD_x$ over $\R^d$.
\section{Soft Margins}\label{sec:soft.margins}
In this section we will formally introduce the soft margin function and describe some common distributions for which it takes a simple form.  

\begin{definition}\label{def:softmargin}
Let $\bar v\in \R^d$ satisfy $\norm{\bar v} =1$.  We say $\bar v$ satisfies the \emph{soft margin condition with respect to a function $\phi_{\bar v}:\R \to \R$} if for all $\gamma \in [0,1]$, it holds that
\[ \E_{x\sim \calD_x}\l[\ind\l(   |\bar v^\top x| \leq \gamma \r)\r]\leq \phi_{\bar v}(\gamma).\]
\end{definition}
We note that our definition of soft margin is essentially an unnormalized version of the soft margin function considered by~\citet{foster2018} in the context of learning GLMs, since they defined $\phi_{\bar v}(\gamma)$ as the probability that $|\bar v^\top x / \norm x| \leq \gamma$.  This concept was also considered by~\citet{balcan2017logconcave} for $s$-concave isotropic distributions under the name `probability of a band'.  

Below we will consider some examples of soft margin function behavior.  We shall see later that our final generalization bounds will depend on the behavior of $\phi_{\bar v}(\gamma)$ for $\gamma$ sufficiently small, and thus in the below examples we only care about the behavior of $\phi_{\bar v}(\cdot)$ in small neighborhoods of the origin.  In our first example, we show that (hard) margin distributions have simple soft margin functions.  
\begin{example}[Hard margin distributions]\label{example:hard.margin}
If $\calD_x$ is a hard margin distribution in the sense that $\bar v^\top x \geq \gamma^*>0$ for some $\gamma^*>0$ almost surely, then $\phi_{\bar v}(\gamma)=0$ for $\gamma <\gamma^*$.
\end{example}
\begin{proof}
This follows immediately: $\P(|\bar v^\top x| \leq \gamma)=0$ when $\gamma < \gamma^*$. 
\end{proof} 
Note that the soft margin function in Example \ref{example:hard.margin} is specific to the vector $\bar v$, and does not necessarily hold for arbitrary unit vectors in $\R^d$.  By contrast, for many distributions it is possible to derive bounds on soft margin functions that hold for \emph{any} vector $\bar v$, which we shall see below is a key step for deriving approximate agnostic learning guarantees for the output of gradient descent.

The next example shows that provided the projections of $\calD_x$ onto one dimensional subspaces satisfy an anti-concentration property, then all soft margins function for that distribution take a simple form.  To do so we first introduce the following definition.
\begin{definition}[Anti-concentration]\label{assumption:anti.concentration}
For $\bar v\in \R^d$, denote by $p_{\bar v}(\cdot)$ the marginal distribution of $x\sim \calD_x$ on the subspace spanned by $\bar v$.  We say $\calD_x$ satisfies \emph{$U$-anti-concentration} if there is some $U>0$ such that for all unit norm $\bar v$, $p_{\bar v}(z)\leq U$ for all $z\in \R$.
\end{definition}
A similar assumption was used in~\citet{diakonikolas2020massart,diakonikolas2020nonconvex,diakonikolas2020tsybakov} for learning halfspaces; in their setup, the anti-concentration assumption was for the projections of $\calD_x$ onto two dimensional subspaces rather than the one dimensional version we consider here. 

\begin{example}[Distributions satisfying anti-concentration]\label{example:anti.concentration}
If $\calD_x$ satisfies $U$-anti-concentration, then for any unit norm $\bar v$, $\phi_{\bar v}(\gamma) \leq 2U \gamma$. 
\end{example}
\begin{proof}
We can write $\P(|\bar v^\top x| \leq \gamma) = \int_{-\gamma}^\gamma p_{\bar v}(z)\mathrm dz \leq 2 \gamma U.$
\end{proof}
We will show below that log-concave isotropic distributions satisfy $U$-anti-concentration for $U = 1$.  We first remind the reader of the definition of log-concave isotropic distributions.
\begin{definition}\label{def:logconcave}
We say that a distribution $\calD_x$ over $x\in \R^d$ is \emph{log-concave} if it has a density function $p(\cdot)$ such that $\log p(\cdot)$ is concave.  We call $\calD_x$ \emph{isotropic} if its mean is the zero vector and its covariance matrix is the identity matrix.
\end{definition}
Typical examples of log-concave isotropic distributions include the standard Gaussian and the uniform distribution over a convex set.  
\begin{example}[Log-concave isotropic distributions]\label{example:logconcave.isotropic.anticoncentration}
If $\calD_x$ is log-concave isotropic then it satisfies $1$-anti-concentration, and thus for any $\bar v$ with $\norm{\bar v}=1$, $\phi_{\bar v}(\gamma) \leq 2 \gamma$.
\end{example}
\begin{proof}
This was demonstrated in~\citet[Proof of Theorem 11]{balcan2017logconcave}.\footnote{The cited theorem implies a similar bound of the form $O(\gamma)$ holds for the more general set of $s$-concave isotropic distributions. We focus here on log-concave isotropic distributions for simplicity.}
\end{proof}

\section{Gradient Descent Finds Minimizers of the Surrogate Risk}\label{sec:gd.surrogate}
We begin by demonstrating that gradient descent finds weights that achieve the best population-level surrogate risk.  The following theorem is a standard result from stochastic optimization.  For completeness, we present its proof in Appendix \ref{appendix:empirical.risk}.

\begin{theorem}
Suppose $\norm{x}\leq B_X$ a.s.  Let $\ell$ be convex, $L$-Lipschitz, and $\llp$-smooth, with $\ell(0)\leq 1$.  Let $v\in \R^d$ be arbitrary with $\norm v \leq V$ for some $V>1$, and suppose that the initialization $w_0$ satisfies $\norm{w_0}\leq V$.  For any $\eps,\delta > 0$ and for any  provided $\eta \leq (2/5)\llp^{-1} B_X^{-2}$, if gradient descent is run for $T = (4/3)\eta^{-1}\eps^{-1} \norm{w_0-v}^2$, then with probability at least $1-\delta$,
\[ \obj(w_{T-1}) \leq \obj(v) + \f{ 4B_XV L}{\sqrt n} + 8 B_XV\sqrt{\f{2\log(2/\delta)}n}.\]
\label{thm:linear.classif.obj}
\end{theorem}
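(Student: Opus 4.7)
The plan is to combine two standard ingredients. First, a convex-smooth optimization analysis on the empirical risk $\hat \obj(w) := n^{-1}\sum_{i=1}^n \ell(y_i w^\top x_i)$: since $\ell$ is convex and $\llp$-smooth and $\norm{x_i}\leq B_X$, the Hessian $n^{-1}\sum_i \ell''(y_i w^\top x_i)\, x_i x_i^\top$ has spectral norm at most $\llp B_X^2$, so $\hat\obj$ is convex and $\llp B_X^2$-smooth. Because $\eta\leq (2/5)\llp^{-1}B_X^{-2} < 1/(\llp B_X^2)$, the descent lemma $\hat\obj(w_{t+1})\leq \hat\obj(w_t) - (\eta/2)\norm{\nabla \hat\obj(w_t)}^2$ holds. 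Combining this with convexity in the standard way yields the one-step inequality $2\eta(\hat\obj(w_{t+1}) - \hat\obj(v)) \leq \norm{w_t-v}^2 - \norm{w_{t+1}-v}^2$. Telescoping and using that the losses are monotone non-increasing gives $\hat\obj(w_{T-1}) - \hat\obj(v) \leq \norm{w_0-v}^2/(2\eta T) \leq (3/8)\eps$ for the prescribed $T$, together with the invariance $\norm{w_t - v}\leq \norm{w_0-v}\leq 2V$, so that $\norm{w_t}\leq 3V$ for every iterate.

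Second, I would bound the generalization gap over the ball $W := \{w \in \R^d : \norm{w}\leq 3V\}$, which by the previous step contains all iterates and also the comparator $v$. Since $\ell$ is $L$-Lipschitz with $\ell(0)\leq 1$, the loss $(x,y)\mapsto \ell(yw^\top x)$ is bounded by $1 + 3LVB_X$ on $W$. Talagrand's contraction lemma bounds the Rademacher complexity of the induced loss class by $L$ times that of the linear class $\{(x,y)\mapsto yw^\top x : \norm w \leq 3V\}$, which is at most $3VB_X/\sqrt n$. Combining symmetrization with McDiarmid's inequality then yields, with probability at least $1-\delta/2$, a uniform bound $\sup_{w\in W}[\obj(w) - \hat\obj(w)] \leq \bigO{LVB_X/\sqrt n} + \bigO{VB_X\sqrt{\log(1/\delta)/n}}$. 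A separate one-sided Hoeffding bound at the fixed point $v$ controls $\hat\obj(v) - \obj(v)$ with probability at least $1-\delta/2$, using that $\ell(yv^\top x)$ is a bounded random variable by the same reasoning.

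Chaining the three pieces via $\obj(w_{T-1}) \leq \hat\obj(w_{T-1}) + (\text{uniform gap}) \leq \hat\obj(v) + (3/8)\eps + (\text{uniform gap}) \leq \obj(v) + (3/8)\eps + (\text{uniform gap}) + (\text{Hoeffding gap})$ and union-bounding over the two high-probability events yields the claimed inequality at confidence $1-\delta$; the optimization slack $(3/8)\eps$ is absorbed into the statistical error under the implicit assumption that $n$ is large enough for the stated terms to dominate. The main obstacle is purely bookkeeping: matching the explicit constants $4/3$, $2/5$, $4$, and $8$ in the statement requires care in the exact versions of the descent lemma, the contraction lemma, and McDiarmid's inequality invoked, together with an appropriate split of $\delta$. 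There is no mathematically subtle step beyond the textbook convex-smooth GD and Rademacher/McDiarmid arguments; the key conceptual observation is simply that the GD iterates never leave the ball of radius $3V$, so uniform convergence over that ball suffices.
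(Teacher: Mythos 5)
Your proposal is correct and follows essentially the same two-stage route as the paper: a convex-smooth gradient-descent analysis of the empirical risk that yields both the $O(\norm{w_0-v}^2/(\eta T))$ optimization error and the invariance $\norm{w_t-v}\leq \norm{w_0-v}$ (the paper invokes a self-bounding gradient inequality $\norm{\nabla\hat\obj(w_t)}^2\leq \frac{1}{\eta(1-\rho)}(\hat\obj(w_t)-\hat\obj(v))$ in place of your plain descent lemma, but the conclusions and constants are of the same kind), followed by a Rademacher-complexity bound over the ball $\{\norm{w}\leq 3V\}$, which the paper applies to both $w_{T-1}$ and $v$ rather than using a separate Hoeffding bound at $v$. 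One remark: the optimization slack is a genuine additive $+\eps$ that cannot be absorbed into the statistical terms for fixed $n$; its absence from the displayed bound is an omission in the theorem statement itself (the paper reinstates it as $\eps_1/2$ when the theorem is applied in \eqref{eq:objw.vs.objv}), so your accounting of that term is the correct one.
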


This shows that gradient descent learns halfspaces that have a population surrogate risk competitive with that of the best predictor with bounded norm for any norm threshold $V$.  For distributions that are linearly separable by some margin $\gamma>0$, the above theorem allows us to derive upper bounds on the sample complexity that suggest that exponentially tailed losses are preferable to polynomially tailed losses from both time and sample complexity perspectives, touching on a recent problem posed by~\citet{ji2020regularization}.

\begin{corollary}[Sample complexity for linearly separable data]\label{corollary:linear.obj.linearly.separable}
Assume $\norm{x}\leq B_X$ a.s.  Suppose that for some $\bar v\in \R^d$, $\norm{\bar v}=1$, there is $\gamma >0$ such that $y \bar v^\top x \geq \gamma$ a.s.  If $\ell$ is convex, decreasing, $L$-Lipschitz, and $\llp$-smooth, and if we fix a step size of $\eta \leq (2/5) H^{-1} B_X^{-2}$, then
\begin{itemize}
\item Assume $\ell$ has polynomial tails, so that for some $C_0,p>0$ and $\ell(z) \leq C_0 z^{-p}$ holds for all $z\geq 1$.  Provided $n = \Omega(\gamma^{-2} \eps^{-2-2/p})$, then running gradient descent for $T = \Omega(\gamma^{-2} \eps^{-1-2/p})$ iterations guarantees that $\zeroone(w_T) \leq \eps$.
\item Assume $\ell$ has exponential tails, so that for some $C_0, C_1, p>0$, $\ell(z) \leq C_0 \exp(-C_1z^p)$ holds for all $z\geq 1$.   Then $n = \tilde \Omega(\gamma^{-2} \eps^{-2})$ and $T = \tilde \Omega(\gamma^{-2} \eps^{-1})$ guarantees that $\zeroone(w_T) \leq \eps$.
\end{itemize}
\end{corollary}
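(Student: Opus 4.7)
The plan is to invoke Theorem~\ref{thm:linear.classif.obj} with the comparator $v = R\bar v$ for a scale $R > 0$ to be tuned depending on the tail behavior of $\ell$, and with initialization $w_0 = 0$, so that $\norm{w_0 - v} = R$ and one can take $V = R$. Since $\ell$ is decreasing and $y\bar v^\top x \geq \gamma$ almost surely, one has $y v^\top x \geq R\gamma$ a.s., hence $\obj(v) \leq \ell(R\gamma)$. Combining this with the surrogate inequality $\ind(z<0)\leq \ell(z)$ and the conclusion of Theorem~\ref{thm:linear.classif.obj}, with high probability
\[ \zeroone(w_T) \;\leq\; \obj(w_T) \;\leq\; \ell(R\gamma) \;+\; \f{4 B_X R L}{\sqrt n} \;+\; 8 B_X R \sqrt{\f{2\log(2/\delta)}{n}}. \]

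I would then split the target error $\eps$ into three equal pieces and pick $R$, $n$, $T$ so that each of the three terms on the right is at most $\eps/3$. Inverting the tail of $\ell$ to force $\ell(R\gamma) \leq \eps/3$ is the only step in which the two regimes differ. In the exponential-tail case, solving $C_0\exp(-C_1(R\gamma)^p)\leq \eps/3$ yields $R = O(\gamma^{-1}\log^{1/p}(C_0/\eps)) = \tilde O(\gamma^{-1})$; in the polynomial-tail case, solving $C_0(R\gamma)^{-p}\leq \eps/3$ yields the strictly larger $R = O(\gamma^{-1}\eps^{-1/p})$. Once $R$ is fixed, forcing the two statistical terms to be $\lesssim \eps$ requires $n = \Omega(R^2/\eps^2)$, while the iteration budget prescribed by Theorem~\ref{thm:linear.classif.obj} (applied with target accuracy $\eps/3$ rather than $\eps$, and with $\eta = \Theta(1)$ since $\llp$ and $B_X$ are treated as constants) becomes $T = \Theta(\eps^{-1} R^2)$. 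Plugging in the two choices of $R$ reproduces the claimed sample and iteration complexities, with the extra $\eps^{-2/p}$ factor in the polynomial case coming entirely from $R^2$ being larger by exactly that factor.

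There is no real technical obstacle: the entire argument is a single application of Theorem~\ref{thm:linear.classif.obj} followed by an algebraic optimization in $R$. The qualitative punchline that answers~\citet{ji2020regularization} is that a heavier-tailed surrogate forces a comparator of larger norm in order to drive $\obj(v)$ below $\eps$, and this larger norm $R$ enters the Rademacher/statistical term linearly (costing a factor $\eps^{-2/p}$ in $n$) and the optimization budget quadratically (costing a factor $\eps^{-2/p}$ in $T$), which is exactly the gap between the two bullets.
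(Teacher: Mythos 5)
Your proposal is correct and follows essentially the same route as the paper: take the comparator $v = V\bar v$, bound $\obj(v)\le \ell(V\gamma)$ using the hard margin and monotonicity of $\ell$, invert the tail of $\ell$ to choose $V$ (which is $\tilde O(\gamma^{-1})$ for exponential tails versus $O(\gamma^{-1}\eps^{-1/p})$ for polynomial tails), and then read off $n$ and $T$ from Theorem~\ref{thm:linear.classif.obj}. The only cosmetic difference is that the paper passes from surrogate risk to zero-one risk via Markov's inequality normalized by $\ell(0)$ rather than the pointwise bound $\ind(z<0)\le\ell(z)$; the accounting and final rates are identical.
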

The proof for the above Corollary can be found in Appendix \ref{appendix:sample.complexity.linearly.separable}.  At a high level, the above result shows that if the tails of the loss function are heavier, one may need to run gradient descent for longer to drive the population surrogate risk, and hence the zero-one risk, to zero.\footnote{We note that in Corollary \ref{corollary:linear.obj.linearly.separable}, there is a gap for the sample complexity and runtime when using polynomially tailed vs. exponentially tailed losses.  However, such a gap may be an artifact of our analysis.  Deriving matching lower bounds for the sample complexity or runtime of gradient descent on polynomially tailed losses remains an open problem.}  In the subsequent sections, we shall see that this phenomenon persists beyond the linearly separable case to the more general agnostic learning setting.

\begin{remark}
The sample complexity in Theorem \ref{thm:linear.classif.obj} can be improved from $O(\eps^{-2})$ to $O(\eps^{-1})$ if we use online stochastic gradient descent rather than vanilla gradient descent.  The proof of this is somewhat more involved as it requires a technical workaround to the unboundedness of the loss function, and may be of independent interest.  We present the full analysis of this in Appendix \ref{appendix:sgd.fast.rate}.
\end{remark}

\section{Gradient Descent Finds Approximate Minimizers for the Zero-One Loss}\label{sec:gd.zeroone}
We now show how we can use the soft margin function to develop bounds for the zero-one loss of the output of gradient descent.  

\subsection{Bounded Distributions}\label{sec:bounded}
We first focus on the case when the marginal distribution $\calD_x$ is bounded almost surely. 

By Theorem \ref{thm:linear.classif.obj}, since by Markov's inequality we have that $\zeroone(w) \leq \ell(0)^{-1} \obj(w)$, if we want to show that the zero-one population risk for the output of gradient descent is competitive with that of the optimal zero-one loss achieved by some halfspace $v\in \R^d$, it suffices to bound $\obj(v)$ by some function of $\optz$.  To do so we decompose the expectation for $\obj(v)$ into a sum of three terms which incorporate $\optz$, the soft margin function, and a term that drives the surrogate risk to zero by driving up the margin on those samples that are correctly classified.

\begin{lemma}\label{lemma:surrogate.ub.by.zeroone}
Let $\bar v$ be a unit norm population risk minimizer for the zero-one loss, and suppose $\bar v$ satisfies the soft margin condition with respect to some $\phi:[0,1]\to\R$.  Assume that $\norm x \leq B_X$ a.s.  Let $v = V \bar v$ for $V>0$ be a scaled version of $\bar v$.  If $\ell$ is decreasing, $L$-Lipschitz and $\ell(0)\leq 1$, then 
\begin{align*}
\obj(v) &\leq \inf_{\gamma > 0} \Big \{ (1 + L V B_X ) \optz + \phi(\gamma) + \ell( V \gamma)\Big \}.
\end{align*}
\end{lemma}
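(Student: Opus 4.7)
The plan is to decompose the expectation $\E[\ell(yv^\top x)]$ according to three events determined by a threshold $\gamma > 0$: the event $A_1 = \{y \bar v^\top x < 0\}$ where $\bar v$ misclassifies the sample, the event $A_2 = \{0 \leq y \bar v^\top x \leq \gamma\}$ where $\bar v$ classifies correctly but with small margin, and the event $A_3 = \{y \bar v^\top x > \gamma\}$ where $\bar v$ classifies correctly with margin exceeding $\gamma$. Since these three events partition $\R^d \times \{\pm 1\}$, we have
\[
\obj(v) = \E[\ell(yv^\top x) \ind_{A_1}] + \E[\ell(yv^\top x) \ind_{A_2}] + \E[\ell(yv^\top x) \ind_{A_3}].
\]
The remainder is to bound each of the three terms and then take infimum over $\gamma > 0$.

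For the first term, I would exploit the $L$-Lipschitz property together with the normalization $\ell(0) \leq 1$ to get the pointwise bound $\ell(yv^\top x) \leq \ell(0) + L|yv^\top x| \leq 1 + LV|\bar v^\top x| \leq 1 + LV B_X$, using $\|x\|\leq B_X$ and $\|\bar v\|=1$. Since $\P(A_1) = \P(y\bar v^\top x < 0) = \optz$ by optimality of $\bar v$ (and that the zero-one loss equals $\ind(y\bar v^\top x < 0)$ up to a probability-zero set where $\bar v^\top x = 0$), the first term contributes at most $(1 + LVB_X)\optz$.

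For the second term, on $A_2$ we have $yv^\top x = V \cdot y\bar v^\top x \geq 0$, so since $\ell$ is decreasing we get $\ell(yv^\top x) \leq \ell(0) \leq 1$. The probability $\P(A_2) \leq \P(|\bar v^\top x| \leq \gamma) \leq \phi(\gamma)$ by the soft margin condition, so the contribution is at most $\phi(\gamma)$. For the third term, on $A_3$ we have $yv^\top x = V \cdot y\bar v^\top x > V\gamma$, so again by monotonicity $\ell(yv^\top x) \leq \ell(V\gamma)$, and thus the contribution is at most $\ell(V\gamma) \cdot \P(A_3) \leq \ell(V\gamma)$.

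Summing the three bounds gives $\obj(v) \leq (1 + LVB_X)\optz + \phi(\gamma) + \ell(V\gamma)$ for every $\gamma > 0$, and taking infimum yields the claim. There is no real obstacle here — the only subtle point is ensuring that the Lipschitz inequality is applied with a sign-independent bound on $|yv^\top x|$ so the bound holds uniformly on $A_1$, and that the boundary $\{y\bar v^\top x = 0\}$ does not cause trouble, which it does not since $\ell$ is decreasing and the event contributes at most to $A_2$ under the convention $y\bar v^\top x \geq 0$.
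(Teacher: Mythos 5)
Your proof is correct and follows essentially the same route as the paper's: the identical three-way decomposition over the sign and magnitude of $y\bar v^\top x$, the Lipschitz-plus-$\ell(0)\leq 1$ bound giving $(1+LVB_X)\optz$ on the misclassified event, the soft-margin bound $\phi(\gamma)$ on the small-margin event, and monotonicity of $\ell$ giving $\ell(V\gamma)$ on the large-margin event. The boundary convention at $y\bar v^\top x = 0$ is handled equally loosely in the paper, so your explicit remark about it is a harmless refinement rather than a divergence.
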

\begin{proof}
We begin by writing the expectation as a sum of three terms,
\begin{align}
\nonumber
    \E[\ell(y v^\top x)] &=   \E\l[\ell(yv^\top x) \ind\l(y \bar v^\top x\leq  0\r)\r] \\ \nonumber
    &\quad + \E\l[\ell(y v^\top x) \ind\l(0 < y \bar v^\top x \leq \gamma\r)\r] \\
    &\quad + \E\l[\ell(y  v^\top x) \ind\l (y \bar v^\top x  > \gamma\r)\r] .\label{eq:objective.decomposition.3terms}
\end{align}
For the first term, we use that $\ell$ is $L$-Lipschitz and decreasing as well as Cauchy--Schwarz to get
\begin{align}
\nonumber
    \E[\ell(yv^\top x) \ind(y \bar v^\top x \leq 0)] &\leq \E[(1 + L |v^\top x | ) \ind(y \bar v^\top x \leq 0)]  \\ \nonumber
    &\leq (1 + L V B_X) \E[\ind(y \bar v^\top x \leq 0)]\\
    \nonumber
    &= (1 + L V B_X) \optz.
\end{align}
In the last inequality we use that $\norm x \leq B_X$ a.s.  
For the second term,
\begin{align} 
   \E\l[\ell(y v^\top x) \ind\l(0 < y \bar v^\top x \leq \gamma\r)\r] \leq \ell(0) \E \l[ \ind\l(0 < y \bar v^\top x \leq \gamma\r)\r]  \leq \phi(\gamma),\label{eq:surrogate.ub.by.zeroone.secondterm}
\end{align}
where we have used that $\ell$ is decreasing in the first inequality and Definition \ref{def:softmargin} in the second.  Finally, for the last term, we can use that $\ell$ is decreasing to get
\begin{align}\nonumber
    &\E\l[\ell(y v^\top x) \ind\l(y \bar v^\top x > \gamma\r)\r] \\
    &= \E\l[\ell(y V \bar v^\top x) \ind\l(y V\bar v^\top x >  V\gamma \r)\r] \leq \ell( V \gamma).    \label{eq:surrogate.ub.by.zeroone.norm1.key}
\end{align}
\end{proof}
In order to concretize this bound, we want to take $V$ large enough so that the $\ell(V\gamma)$ term is driven to zero, but not so large so that the term in front of $\optz$ grows too large. Theorem \ref{thm:linear.classif.obj} is given in terms of an arbitrary $v\in \R^d$, and so in particular holds for $v = V \bar v$.  We can view the results of Theorem~\ref{thm:linear.classif.obj} as stating an equivalence between running gradient descent for longer and for driving the norm $\norm{v}=V$ to be larger.  

We formalize the above intuition into Theorem \ref{thm:bounded} below.  Before doing so, we introduce the following notation.  For general decreasing function $\ell$, for which an inverse function may or may not exist, we overload the notation $\ell^{-1}$ by denoting $\ell^{-1}(t) := \inf \{ z : \ell(z) \leq t \}$.
\begin{theorem}\label{thm:bounded}
Suppose $\norm{x}\leq B_X$ a.s.  Let $\ell$ be convex, decreasing, $L$-Lipschitz, and $H$-smooth, with $0<\ell(0)\leq 1$.  Assume that a unit norm population risk minimizer of the zero-one loss, $\bar v$, satisfies the $\phi$-soft-margin condition for some increasing $\phi:\R\to \R$.   Fix a step size $\eta \leq(2/5) H^{-1} B_X^{-2}$.  Let $\eps_1, \gamma>0$ and $\eps_2\geq 0$ be arbitrary.  Denote by $w_T$ the output of gradient descent run for $T = (4/3) \eta^{-1} \eps_1^{-1} \gamma^{-2} [\ell^{-1}(\eps_2)]^{-2}$ iterations after initialization at the origin.  Then, with probability at least $1-\delta$, 
\begin{align*} 
\zeroone(w_T) &\leq   \ell(0)^{-1} \big[ (1 + L B_X \gamma^{-1} \ell^{-1}(\eps_2)) \optz + \phi(\gamma)  + O( \gamma^{-1} \ell^{-1}(\eps_2) n^{-1/2}) + \eps_1 + \eps_2\big] ,
\end{align*}
where $O(\cdot)$ hides absolute constants that depend on $L$, $H$, and $\log(1/\delta)$.
\end{theorem}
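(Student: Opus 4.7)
The plan is to stitch together Theorem~\ref{thm:linear.classif.obj} (which bounds the surrogate risk of the gradient descent iterate against an arbitrary comparator) with Lemma~\ref{lemma:surrogate.ub.by.zeroone} (which bounds the surrogate risk of a rescaled version of $\bar v$ by the zero-one loss plus soft margin terms), and then invoke the pointwise surrogate domination $\ind(z<0)\le \ell(z)$, which yields $\zeroone(w)\le \obj(w)$ for every $w$.

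First I would select the comparator $v = V\bar v$ with $V := \gamma^{-1}\ell^{-1}(\eps_2)$, so that $V\gamma = \ell^{-1}(\eps_2)$. Since $\ell$ is convex it is continuous, and since it is decreasing the infimum definition $\ell^{-1}(t) = \inf\{z : \ell(z)\le t\}$ guarantees $\ell(V\gamma)\le \eps_2$. Initializing at the origin gives $\|w_0 - v\|^2 = V^2$, so the iteration count prescribed in the statement matches the requirement $T = (4/3)\eta^{-1}\eps_1^{-1}\|w_0-v\|^2$ of Theorem~\ref{thm:linear.classif.obj} (with its ``$\eps$'' playing the role of $\eps_1$ here).

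Next, Theorem~\ref{thm:linear.classif.obj} delivers, with probability at least $1-\delta$,
\[
\obj(w_T) \le \obj(v) + \f{4B_X V L}{\sqrt n} + 8 B_X V\sqrt{\f{2\log(2/\delta)}n},
\]
while Lemma~\ref{lemma:surrogate.ub.by.zeroone} applied to this same $v$ and $\gamma$ (using that $\bar v$ satisfies the $\phi$-soft-margin condition) yields
\[
\obj(v) \le (1 + LVB_X)\optz + \phi(\gamma) + \ell(V\gamma) \le (1 + LVB_X)\optz + \phi(\gamma) + \eps_2.
\]
Chaining these two inequalities through $\zeroone(w_T)\le \obj(w_T)$ and substituting $V = \gamma^{-1}\ell^{-1}(\eps_2)$ places the statistical fluctuation terms inside the $O\bigl(\gamma^{-1}\ell^{-1}(\eps_2) n^{-1/2}\bigr)$ slack and produces the stated bound; the $\eps_1$ summand originates from the optimization error budget in Theorem~\ref{thm:linear.classif.obj}, and $\eps_2$ from the margin-driving choice of $V$.

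The substantive work has already been absorbed into the two earlier results, so the main design choice is picking $V$ so that $\ell(V\gamma)\le \eps_2$ while keeping $V$ as small as possible. The only conceptual point worth flagging is the unavoidable trade-off this induces: driving $\eps_2$ toward zero inflates $V$ at the rate $\ell^{-1}(\eps_2)$, which in turn inflates both the required iteration count $T$ and the Rademacher-style slack, and this is precisely the tail-dependence on $\ell$ that the theorem records and that will be exploited in the subsequent corollaries.
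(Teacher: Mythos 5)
Your proposal is correct and follows essentially the same route as the paper: apply Theorem~\ref{thm:linear.classif.obj} with comparator $v=V\bar v$, bound $\obj(v)$ via Lemma~\ref{lemma:surrogate.ub.by.zeroone}, set $V=\gamma^{-1}\ell^{-1}(\eps_2)$ so that $\ell(V\gamma)\le\eps_2$, and pass from surrogate to zero-one risk (the paper phrases this last step as Markov's inequality rather than pointwise domination, which is immaterial). Your observation that $\|w_0-v\|^2=V^2$ fixes the iteration count is exactly the paper's accounting as well.
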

\begin{proof}
We take $v = V \bar v$ for a given unit-norm zero-one population risk minimizer $\bar v$ in Theorem \ref{thm:linear.classif.obj} to get that for some universal constant $C>0$ depending only on $L$ and $\log(1/\delta)$, with probability at least $1-\delta$,
\begin{equation}
    \obj(w_{T}) \leq \obj(v) + \eps_1/2 + C V B_X n^{-1/2}.
    \label{eq:objw.vs.objv}
\end{equation}
By Lemma \ref{lemma:surrogate.ub.by.zeroone}, for any $\gamma >0$ it holds that 
\begin{align}\nonumber
    \obj(v) \leq (1 + L V B_X) \optz + \phi(\gamma) + \ell(V \gamma).
\end{align}
Let now $V = \gamma^{-1} \ell^{-1}(\eps_2)$.  Then $\ell(V \gamma) = \eps_2$, and putting this together with \eqref{eq:objw.vs.objv}, we get
\begin{align}
    \obj(w_{T}) &\leq ( 1 + L \gamma^{-1} ) \optz + \phi(\gamma) + O(\gamma^{-1}  \ell^{-1} (\eps_2) n^{-1/2}) + \eps_1 + \eps_2.
    \label{eq:objv.val.agnostic}
\end{align}
Finally, by Markov's inequality,
\begin{align}
    \P(y w_{T}^\top x < 0) &\leq \f{ \E[\ell(y w_{T-1}^\top x)]}{\ell(0)}  = \frac{ \obj(w_{T})}{\ell(0)}.
    \label{eq:agnostic.markov}
\end{align}
Putting \eqref{eq:objv.val.agnostic} together with \eqref{eq:agnostic.markov} completes the proof. 
\end{proof}
A few comments on the proof of the above theorem are in order.  Note that the only place we use smoothness of the loss function is in showing that gradient descent minimizes the population risk in~\eqref{eq:objw.vs.objv}, and it is not difficult to remove the $H$-smoothness assumption to accommodate e.g., the hinge loss.   On the other hand, that $\ell$ is $L$-Lipschitz is key to the proof of Lemma \ref{lemma:surrogate.ub.by.zeroone}.  Non-Lipschitz losses such as the exponential loss or squared hinge loss would incur additional factors of $\gamma^{-1}$ in front of $\optz$ in the final bound for Theorem \ref{thm:bounded}.\footnote{This is because the first term in \eqref{eq:objective.decomposition.3terms} would be bounded by $\optz \cdot \sup_{|z|\leq V B_X} \ell(z)$.  For Lipschitz losses this incurs a term of order $O(V)$ while (for example) the exponential loss would have a term of order $O(\exp(V))$, and our proof requires $V = \Omega(\gamma^{-1})$.}  We shall see below in the proof of Proposition \ref{prop:soft.margin.bounded} that this would yield worse guarantees for $\zeroone(w_T)$.  

Additionally, in concordance with the result from Corollary \ref{corollary:linear.obj.linearly.separable}, we see that if the tail of $\ell$ is fatter, then $\ell^{-1}(\eps_2)$ will be larger and so our guarantees would be worse.  In particular, for losses with exponential tails, $\ell^{-1}(\eps_2) = O(\log(1/\eps_2))$, and so by using such losses we incur only additional logarithmic factors in $1/\eps_2$.  For this reason, we will restrict our attention in the below results to the logistic loss---which is convex, decreasing, $1$-Lipschitz and $\nicefrac 14$-smooth---although they apply equally to more general losses with different bounds that will depend on the tail behavior of the loss.

We now demonstrate how to convert the bounds given in Theorem \ref{thm:bounded} into bounds solely involving $\optz$ by substituting the forms of the soft margin functions given in Section \ref{sec:soft.margins}. 

\begin{corollary}[Hard margin distributions]\label{corollary:agnostic.margin.bounded}
Suppose that $\norm x \leq B_X$ a.s. and that a unit norm population risk minimizer $\bar v$ for the zero-one loss satisfies $|\bar v^\top x| \geq \bar \gamma > 0$ almost surely under $\calD_x$ for some $\bar \gamma>0$.   For simplicity assume that $\ell(z) = \log(1+\exp(-z))$ is the logistic loss.  Then for any $\eps,\delta>0$, with probability at least $1-\delta$, running gradient descent for $T = \tilde O( \eta^{-1} \eps^{-1} \bar \gamma^{-2})$ is guaranteed to find a point $w_{T}$ such that
\[ \zeroone(w_{T}) \leq \frac 1 {\log 2} \Big[ \optz + 2B_X \bar \gamma^{-1} \optz\log (\nicefrac 2 \optz )\Big] + \eps, \]
provided $n = \tilde \Omega(\bar \gamma^{-2} B_X^2 \log(1/\delta)\eps^{-2})$.
\end{corollary}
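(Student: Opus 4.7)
The plan is to apply Theorem \ref{thm:bounded} directly, with two ingredients: the soft-margin function at $\bar v$ supplied by the hard-margin hypothesis, and an explicit bound on the pseudo-inverse of the logistic loss.

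First, combining the hard-margin assumption $|\bar v^\top x| \geq \bar\gamma$ with Example \ref{example:hard.margin} gives $\phi_{\bar v}(\gamma) = 0$ whenever $\gamma < \bar\gamma$, so choosing $\gamma$ just below $\bar\gamma$ (or $\gamma = \bar\gamma$ via a limiting argument) eliminates the soft-margin contribution in the bound of Theorem \ref{thm:bounded}. Next I would check the routine hypotheses on $\ell(z) = \log(1 + e^{-z})$: it is convex, decreasing, $1$-Lipschitz, $\tfrac14$-smooth, and satisfies $\ell(0) = \log 2 \leq 1$. So Theorem \ref{thm:bounded} applies with $L = 1$ and $H = 1/4$.

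The one quantitative step is controlling the pseudo-inverse. Solving $\log(1 + e^{-z}) = t$ yields $\ell^{-1}(t) = -\log(e^t - 1)$, and the elementary inequality $e^t - 1 \geq t$ gives $\ell^{-1}(t) \leq \log(2/t)$ on $(0, 1]$. I would then balance the slack parameters in Theorem \ref{thm:bounded} by setting $\eps_2 = \optz$ (so $\ell^{-1}(\eps_2) \leq \log(2/\optz)$) and $\eps_1 = \eps/2$, and require $n$ large enough that the statistical term $O(\bar\gamma^{-1}\log(2/\optz) n^{-1/2})$ is at most $\eps/2$; this forces $n = \tilde\Omega(\bar\gamma^{-2} B_X^2 \log(1/\delta)\eps^{-2})$, matching the stated sample complexity.

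Plugging into Theorem \ref{thm:bounded} yields
\[
\zeroone(w_T) \leq \bigl(1 + B_X\bar\gamma^{-1}\log(2/\optz)\bigr)\optz + \eps_1 + \eps_2 + (\text{stat}),
\]
after which a final cleanup absorbs the stray $\eps_2 = \optz$ and the leading $\optz$ into the $2B_X\bar\gamma^{-1}\optz\log(2/\optz)$ term; this is where I would invoke $B_X \geq \bar\gamma$, which is forced by $|\bar v^\top x| \leq \|x\| \leq B_X$. The iteration count $T = \tilde O(\eta^{-1}\eps^{-1}\bar\gamma^{-2})$ then reads off from Theorem \ref{thm:bounded} with $[\ell^{-1}(\eps_2)]^2 = O(\log^2(2/\optz))$ hidden inside the $\tilde O$. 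There is no genuine obstacle here: the entire argument is bookkeeping around Theorem \ref{thm:bounded}, and the only item requiring any care is the explicit logistic-loss estimate $\ell^{-1}(t) \leq \log(2/t)$.
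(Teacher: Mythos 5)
Your proposal is correct and follows essentially the same route as the paper: apply Theorem \ref{thm:bounded} with the hard-margin soft-margin function $\phi(\bar\gamma)=0$, the logistic-loss estimate $\ell^{-1}(\eps)\leq\log(2/\eps)$, and the choice $\eps_2=\optz$. Your extra bookkeeping (the explicit pseudo-inverse computation and the observation that $B_X\geq\bar\gamma$ lets the stray $\optz$ be absorbed into the factor of $2$) is exactly the detail the paper leaves implicit.
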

\begin{proof}
Since $|\bar v^\top x| \geq \gamma^* > 0$, $\phi(\gamma^*)=0$.  Note that the logistic loss is $\nicefrac 14$-smooth and satisfies $\ell^{-1}(\eps) \in [\log(1/(2\eps)),\log(2/ \eps)]$.  By taking $\eps_2 = \optz$ the result follows by applying Theorem \ref{thm:bounded} with runtime $T~=~4 \eta^{-1}~\eps^{-1}~\bar \gamma^{-2}~\log^{2}(\nicefrac 1 {2\optz})$.
\end{proof}
\begin{remark}
The bound $\tilde O(\bar \gamma^{-1} \optz)$ in Corollary \ref{corollary:agnostic.margin.bounded} is tight up to logarithmic factors\footnote{In fact, one can get rid of the logarithmic factors here and elsewhere in the paper by using the hinge loss rather than the logistic loss.  In this case one needs to modify Lemma \ref{lemma:linear.classif.empirical.risk.bound} to accomodate non-smooth losses, which can be done with runtime $O(\eps^{-2})$ rather than $O(\eps^{-1})$ by e.g.~\citet[Lemma 14.1]{shalevshwartz}.  Then we use the fact that $\ell^{-1}(0)=1$ for the hinge loss.}  if one wishes to use gradient descent on a convex surrogate of the form $\ell(y w^\top x)$. \citet[Theorem 3.1]{diakonikolas2019massart} showed that for any convex and decreasing $\ell$, there exists a distribution over the unit ball with margin $\bar \gamma>0$ such that a population risk minimizer $w^* := \mathrm{argmin}_w \E[\ell(y w^\top x)]$ has zero-one population risk at least $\Omega(\bar \gamma ^{-1} \kappa)$, where $\kappa$ is the upper bound for the Massart noise probability.  The Massart noise case is more restrictive than the agnostic setting and satisfies $\optz \leq \kappa$.  A similar matching lower bound was shown by~\citet[Proposition 1]{bendavid2012surrogate}.
\end{remark}

In the below Proposition we demonstrate the utility of having \emph{soft} margins.  As we saw in the examples in Section \ref{sec:soft.margins}, there any many distributions that satisfy $\phi(\gamma) = O(\gamma)$.  We show below the types of bounds one can expect when $\phi(\gamma) = O(\gamma^p)$ for some $p>0$.  

\begin{proposition}[Soft margin distributions]\label{prop:soft.margin.bounded}
Suppose $\norm x \leq B_X$ a.s. and that the soft margin function for a population risk minimizer of the zero-one loss satisfies $\phi(\gamma) \leq C_0 \gamma^p$ for some $p>0$.  For simplicity assume that $\ell$ is the logistic loss, and let $\eta \leq (2/5) B_X^{-2}$.    Assuming $\optz > 0$, then for any $\eps,\delta>0$, with probability at least $1-\delta$, gradient descent run for $T = \tilde O(\eta^{-1} \eps^{-1} \optz^{-2/(1+p)})$ iterations with $n = \tilde \Omega(\optz^{-2/(1+p)} \log(1/\delta) \eps^{-2})$ samples satisfies
\[ \zeroone(w_{T}) \leq \tilde O\l((C_0 +  B_X) \optz^{\f p {1+p}}\r) + \eps,\]

\end{proposition}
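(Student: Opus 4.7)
The plan is to directly apply Theorem \ref{thm:bounded} with the logistic loss and optimize the tradeoff between the $\gamma^{-1}\optz$ term and the $\phi(\gamma)$ term, using the polynomial soft margin bound $\phi(\gamma)\le C_0 \gamma^p$.

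First I would instantiate Theorem \ref{thm:bounded} with $\ell$ the logistic loss, which is convex, decreasing, $1$-Lipschitz, and $\nicefrac14$-smooth, and whose inverse satisfies $\ell^{-1}(t) \le \log(2/t)$ for small $t$. I would pick $\eps_2 = \optz$ (this is where the assumption $\optz>0$ is used) so that the loss-inverse factor becomes $\log(2/\optz) = \tilde O(1)$. With $\eps_1 = \eps$, Theorem \ref{thm:bounded} then gives, up to logarithmic factors in $1/\optz$,
\[ \zeroone(w_T) \;\le\; \optz \;+\; B_X\,\gamma^{-1}\,\log(2/\optz)\,\optz \;+\; C_0\,\gamma^p \;+\; O\!\bigl(\gamma^{-1}\log(2/\optz)\,n^{-1/2}\bigr) \;+\; \eps. \]

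Next I would optimize over $\gamma\in(0,1)$ by balancing the $B_X\gamma^{-1}\optz$ term against the $C_0 \gamma^p$ term. Setting $B_X \gamma^{-1}\optz \asymp C_0 \gamma^p$ yields the choice
\[ \gamma^\star \;=\; (B_X \optz / C_0)^{1/(1+p)}, \]
at which both terms become of order $C_0^{1/(1+p)} B_X^{p/(1+p)}\, \optz^{p/(1+p)} = O\bigl((C_0+B_X)\,\optz^{p/(1+p)}\bigr)$. Substituting this back (and absorbing the logarithmic $\log(2/\optz)$ factor into the $\tilde O$ notation) gives the claimed zero-one risk bound.

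Finally, I would verify the runtime and sample complexity. Theorem \ref{thm:bounded} prescribes $T = (4/3)\eta^{-1}\eps_1^{-1}\gamma^{-2}[\ell^{-1}(\eps_2)]^{-2}$; with $\gamma = \gamma^\star$ and $\eps_2 = \optz$ this is $\tilde O(\eta^{-1}\eps^{-1}\optz^{-2/(1+p)})$ as stated. The statistical error term $O(\gamma^{-1}\log(2/\optz)\,n^{-1/2})$ is driven below $\eps$ when $n = \tilde \Omega(\gamma^{-2}\eps^{-2}) = \tilde \Omega(\optz^{-2/(1+p)}\log(1/\delta)\eps^{-2})$, matching the hypothesized sample size. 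There is no real obstacle in this argument beyond the routine balancing; the substantive content is already contained in Theorem \ref{thm:bounded} and in the soft-margin decomposition of Lemma \ref{lemma:surrogate.ub.by.zeroone}. The only mild subtlety is that one needs $\eps_2 = \optz>0$ in order for $\log(2/\eps_2)$ to be well-defined, which is why the statement assumes $\optz>0$.
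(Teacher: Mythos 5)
Your proposal is correct and follows essentially the same route as the paper: apply Theorem \ref{thm:bounded} with the logistic loss, set $\eps_2 = \optz$ so that $\ell^{-1}(\eps_2) = \tilde O(1)$, and balance the $\gamma^{-1}\optz$ term against $C_0\gamma^p$ (the paper simply takes $\gamma = \optz^{1/(1+p)}$ without the $B_X/C_0$ refinement, which changes nothing in the $\tilde O((C_0+B_X)\optz^{p/(1+p)})$ conclusion). The runtime and sample-complexity accounting also matches the paper's.
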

\begin{proof}
By Theorem \ref{thm:bounded}, we have
\begin{align}\nonumber
    \zeroone(w_{T}) \leq  \nicefrac 1 {\log 2} \Big[  \l(1 +  L B_X \gamma^{-1} \ell^{-1}(\eps_2)\r) \optz + C_0 \gamma^p  + O( \gamma^{-1} B_X \ell^{-1}(\eps_2) n^{-1/2}) + \eps_1 + \eps_2\Big].\nonumber
\end{align}
For the logistic loss, $L=1$ and $\ell^{-1}(\eps)\in [\log(1/2\eps), \log(2/\eps)]$ and so we take $\eps_2 = \optz$.  Choosing $\gamma^p = \gamma^{-1} \optz$, we get $\gamma = \optz^{1/(1+p)}$ and hence 
\begin{align}\nonumber
    \zeroone(w_{T}) \leq 2 \big(2 + B_X \optz^{- \f 1 {1+p}} \log(\nicefrac 2 \optz)\big) \optz + 2C_0 \optz^{\f 1 {1+p}} + 2\eps_1,
\end{align}
provided $n =  \Omega(\optz^{\f{-2}{1+p}}\eps_1^{-2} \log(1/\delta) \log^2(\nicefrac 1 \optz))$ and $T = 4 \eta^{-1} \eps_1^{-1} \optz^{-2/(1+p)} \log^{2} (\nicefrac 1 {2\optz})$. 
\end{proof}
By applying Proposition \ref{prop:soft.margin.bounded} to Examples \ref{example:anti.concentration} and \ref{example:logconcave.isotropic.anticoncentration} we get the following approximate agnostic learning guarantees for the output of gradient descent for log-concave isotropic distributions and other distributions satisfying $U$-anti-concentration. 

\begin{corollary}\label{corollary:bounded}
Suppose that $\calD_x$ satisfies $U$-anti-concentration and $\norm x \leq B_X$ a.s.  Then for any $\eps, \delta>0$, with probability at least $1-\delta$, gradient descent on the logistic loss with step size $\eta \leq (2/5) B_X^{-2}$ and run for $T = \tilde O( \eta^{-1} \eps^{-1} \optz^{-1})$ iterations and $n = \tilde \Omega(\optz^{-1} \log(1/\delta) \eps^{-2})$ samples returns weights $w_T$ satisfying $\zeroone(w_T) \leq \tilde O( \optz^{1/2}) +\eps$, where $\tilde O(\cdot), \tilde \Omega(\cdot)$ hide universal constant depending on $B_X$, $U$, $\log(1/\delta)$ and $\log(\nicefrac 1 \optz)$ only. 
\end{corollary}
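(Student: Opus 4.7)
The plan is to obtain this corollary as an immediate specialization of Proposition \ref{prop:soft.margin.bounded} once we plug in the appropriate soft margin bound coming from the $U$-anti-concentration assumption. There are no genuinely new ingredients needed; the work is entirely in matching parameters.

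First I would invoke Example \ref{example:anti.concentration}, which shows that whenever $\calD_x$ satisfies $U$-anti-concentration, every unit vector $\bar v$ (and in particular a unit-norm population risk minimizer of the zero-one loss) enjoys the soft margin bound $\phi_{\bar v}(\gamma) \leq 2U\gamma$ for all $\gamma \in [0,1]$. This shows that the hypothesis of Proposition \ref{prop:soft.margin.bounded} holds with $C_0 = 2U$ and exponent $p = 1$.

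Next I would plug these constants into Proposition \ref{prop:soft.margin.bounded}. Under the choices $p=1$ and $C_0 = 2U$ we have $\frac{p}{1+p} = \frac{1}{2}$ and $\frac{-2}{1+p} = -1$. The proposition therefore yields, under the stated step size $\eta \leq (2/5) B_X^{-2}$, that after $T = \tilde\Omega(\eta^{-1} \eps^{-1} \optz^{-1})$ iterations and with $n = \tilde\Omega(\optz^{-1}\log(1/\delta)\eps^{-2})$ samples, the gradient descent iterate $w_T$ satisfies
\[
\zeroone(w_T) \leq \tilde O\!\l((2U + B_X)\,\optz^{1/2}\r) + \eps
\]
with probability at least $1-\delta$. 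Absorbing the constants $U$ and $B_X$ (together with the logarithmic factors in $\optz$ and $1/\delta$ already hidden by $\tilde\Omega$) into the $\tilde O(\cdot)$ notation reduces the bound to the claimed $\tilde O(\optz^{1/2}) + \eps$.

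There is no real obstacle here—every nontrivial step has been carried out in Proposition \ref{prop:soft.margin.bounded}, and the role of the corollary is purely to record the resulting rates for the natural family of anti-concentrated, bounded distributions. The only small care needed is to make sure that the choice $\gamma = \optz^{1/(1+p)} = \optz^{1/2}$ made inside the proposition's proof lies in $(0,1]$, which is automatic when $\optz \leq 1$ (and the statement is vacuous otherwise), so the Example \ref{example:anti.concentration} bound on $\phi(\gamma)$ is applied in its valid range.
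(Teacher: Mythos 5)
Your proposal matches the paper's own route exactly: the paper obtains this corollary by applying Proposition \ref{prop:soft.margin.bounded} with the soft margin bound $\phi(\gamma) \leq 2U\gamma$ from Example \ref{example:anti.concentration}, i.e.\ $C_0 = 2U$ and $p=1$, which yields the exponents $\nicefrac{p}{1+p} = \nicefrac 12$ and $\nicefrac{-2}{1+p} = -1$ appearing in the stated rates. The parameter matching and the remark about $\gamma = \optz^{1/2} \in (0,1]$ are both correct, so there is nothing to add.
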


To conclude this section, we compare our result to the variant of the \textsf{Average} algorithm, which estimates the vector $w_{\mathsf{Avg}} = d^{-1}\E_{(x,y)}[xy]$.  ~\citet{kalai08agnostichalfspace} showed that when $\calD_x$ is the uniform distribution over the unit sphere, $w_{\mathsf{Avg}}$ achieves risk $O(\optz \sqrt{\log(\nicefrac 1\optz)})$.  Estimation of $w_{\mathsf{Avg}}$ can be viewed as the output convex optimization procedure, since it is the minimum of the convex objective function $F_{\mathsf{Avg}}(w) = \E[(\sip{w}{x} - y)^2]$.  

Although $\ell(w) = (\sip{w}{x}-y)^2$ is convex, it is not decreasing and thus is not covered by our analysis.  On the other hand, this loss function is not typically used in practice for classification problems, and the aim of this work is to characterize the guarantees for the most typical loss functions used in practice, like the logistic loss.  Finally, we wish to note that the approach of soft margins is not likely to yield good bounds for the classification error when $\calD_x$ is the uniform distribution on the unit sphere.  This is because the soft margin function behavior on this distribution has a necessary dimension dependence; we provide detailed calculations for this in Appendix \ref{appendix:soft.margin.uniform}.

\subsection{Unbounded Distributions}\label{sec:unbounded.main}
We show in this section that we can achieve essentially the same results from Section \ref{sec:bounded} if we relax the assumption that $\calD_x$ is bounded almost surely to being sub-exponential. 
\begin{definition}[Sub-exponential distributions]\label{def:concentration}
We say $\calD_x$ is $C_m$\emph{-sub-exponential} if every $x\sim \calD_x$ is a sub-exponential random vector with sub-exponential norm at most $C_m$.  In particular, for any $\bar v$ with $\norm {\bar v}=1$, $\P_{\calD_x}(|\bar v^\top x| \geq t) \leq \exp(- t/C_m)$.  
\end{definition}

We show in the following example that any log-concave isotropic distribution is $C_m$-sub-exponential for an absolute constant $C_m$ independent of the dimension $d$.
\begin{example}\label{example:logconcave.concentration}
If $\calD_x$ is log-concave isotropic, then $\calD_x$ is $O(1)$-sub-exponential. 
\end{example}
\begin{proof}
By Section 5.2.4 and Definition 5.22 of~\citet{vershynin}, it suffices to show that for any unit norm $\bar v$, we have $(\E | \bar v^\top x|^p)^{1/p} \leq O(p)$.   By~\citet[Theorem 3]{balcan2017logconcave}, if we define a coordinate system in which $\bar v$ is an axis, then $\bar v^\top x$ is equal to the first marginal of $\calD_x$ and is a one dimensional log-concave isotropic distribution.  By~\citet[Theorem 5.22]{lovasz}, this implies
\[ (\E[|\bar v^\top x|^p])^{1/p}\leq 2 p \E|\bar v^\top x|\leq 2p \sqrt{\E|\bar v^\top x|^2} \leq 2p.\]
In the second inequality we use Jensen's inequality and in the last inequality we have used that $\bar v^\top x = x_1$ is isotropic.
\end{proof}

As was the case for bounded distributions, the key to the proof for unbounded distributions comes from bounding the surrogate risk at a minimizer for the zero-one loss by some function of the zero-one loss. 
\begin{lemma}\label{lemma:surrogate.ub.by.zeroone.unbounded}
Suppose $\calD_x$ is $C_m$-sub-exponential.  Denote by $\bar v$ as a unit norm population risk minimizer for the zero-one loss, and let $v = V \bar v$ for $V>0$ be a scaled version of $\bar v$.  If $\ell$ is decreasing, $L$-Lipschitz and $\ell(0)\leq 1$, then
\begin{align*} 
&\E_{(x,y)\sim \calD} \ell(y v^\top x)\leq \inf_{\gamma>0} \Big \{ \phi(\gamma) + \ell(V \gamma) + \big(1 + C_m + L V C_m \log(\nicefrac 1 \optz) \big) \optz \Big \}.
\end{align*}
\end{lemma}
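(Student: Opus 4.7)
The plan is to mimic the three-part decomposition from Lemma \ref{lemma:surrogate.ub.by.zeroone}, since the second and third terms never used boundedness of $\norm{x}$—only that $\ell$ is decreasing and the soft-margin condition on $\bar v$. Concretely, write
\begin{align*}
    \E[\ell(y v^\top x)] &= \E[\ell(yv^\top x)\ind(y\bar v^\top x \leq 0)] + \E[\ell(yv^\top x)\ind(0 < y\bar v^\top x \leq \gamma)] \\
    &\quad + \E[\ell(yv^\top x)\ind(y\bar v^\top x > \gamma)].
\end{align*}
The middle term is still bounded by $\ell(0)\phi(\gamma) \leq \phi(\gamma)$ using that $\ell$ is decreasing, and the last is bounded by $\ell(V\gamma)$ exactly as in \eqref{eq:surrogate.ub.by.zeroone.norm1.key}, since these estimates are purely in terms of $\bar v^\top x$ and $v = V\bar v$.

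The main obstacle is the first term, where the bounded proof used Cauchy--Schwarz together with $\norm{x}\leq B_X$ to get $(1 + LVB_X)\optz$. In the unbounded case I would instead use Lipschitz-ness in the form $\ell(yv^\top x) \leq \ell(0) + L|v^\top x| = \ell(0) + LV|\bar v^\top x|$, reducing the task to controlling
\[ \E\bigl[|\bar v^\top x|\,\ind(A)\bigr], \qquad A := \{y\bar v^\top x \leq 0\}, \]
where $\P(A) = \optz$. The sub-exponential tail bound of Definition \ref{def:concentration} would then be exploited through a truncation argument: split according to whether $|\bar v^\top x| \leq t$ or $|\bar v^\top x| > t$ for a threshold $t > 0$ to be chosen. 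The truncated piece contributes at most $t\,\optz$, while the tail piece is handled via the layer-cake identity
\[ \E\bigl[|\bar v^\top x|\,\ind(|\bar v^\top x| > t)\bigr] = t\,\P(|\bar v^\top x|>t) + \int_t^\infty \P(|\bar v^\top x|>s)\,ds \leq (t + C_m)\exp(-t/C_m), \]
using $\P(|\bar v^\top x|>s) \leq \exp(-s/C_m)$.

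The natural choice is $t = C_m \log(1/\optz)$, which makes $\exp(-t/C_m) = \optz$ and so yields $\E[|\bar v^\top x|\ind(A)] \leq C_m \log(1/\optz)\,\optz + (C_m \log(1/\optz) + C_m)\,\optz = O\bigl(C_m \log(1/\optz)\bigr)\optz$. Multiplying through by $LV$ and adding $\ell(0)\optz \leq \optz$ from the $\ell(0)$ piece, the first term is bounded by $(1 + C_m + LVC_m\log(1/\optz))\optz$, giving the stated estimate after taking the infimum over $\gamma > 0$. The only delicate step is the truncation bookkeeping; the rest is formally identical to the bounded proof.
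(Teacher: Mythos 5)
Your proposal matches the paper's proof essentially verbatim: the same three-term decomposition, the same truncation of $|\bar v^\top x|$ at the threshold $C_m\log(\nicefrac 1 \optz)$, and the same exponential tail integral $\int_\xi^\infty \exp(-s/C_m)\,ds \leq C_m\exp(-\xi/C_m)$. Your more careful layer-cake bookkeeping yields slightly different constants than the lemma states (the $C_m$ term should carry an $LV$ factor and the log term a factor of $2$), but the paper's own proof is loose in exactly the same places, and all downstream uses absorb this into $\tilde O(\cdot)$.
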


\begin{proof}
We again use the decomposition \eqref{eq:objective.decomposition.3terms}, with the only difference coming from the bound for the first term, which we show here.  
Fix $\xi >0$ to be chosen later.  We can write
\begin{align}\nonumber
\E[\ell(yv^\top x) \ind(y \bar v^\top x \leq 0)] &\leq \E[(1 + LV|\bar v^\top x|) \ind(y \bar v^\top x < 0)] \\
\nonumber
& = \optz + L V \E[|\bar v^\top x| \ind(y \bar v^\top x \leq 0,\ |\bar v^\top x| \leq \xi)] \\
\nonumber
&\quad \quad + \E[|\bar v^\top x| \ind(y \bar v^\top x \leq 0,\ |\bar v^\top x| > \xi)]\\ 
\nonumber
& \leq ( 1 + L V \xi) \optz + \int_\xi^\infty \P(|\bar v^\top x| > t) \mathrm dt \\
\nonumber
& \leq ( 1 + L V \xi) \optz + \int_\xi^\infty \exp(-t / C_m) \mathrm dt \\
& = ( 1 + LV \xi) \optz + C_m\exp(-\xi/C_m).\nonumber
\end{align}
The first inequality comes from Cauchy--Schwarz, the second from truncating, and the last from the definition of $C_m$-sub-exponential.  Taking $\xi = C_m \log(\nicefrac 1 \optz)$ results in
\begin{align*}&\E[\ell(yv^\top x) \ind(y \bar v^\top x \leq 0)] \leq \l(1 + C_m + L V C_m \log(\nicefrac 1 \optz) \r) \optz.\end{align*}
\end{proof}

To derive an analogue of Theorem \ref{thm:bounded} for unbounded distributions, we need to extend the analysis for the generalization bound for the output of gradient descent we presented in Theorem \ref{thm:linear.classif.obj} to unbounded distributions.  Rather than using (full-batch) vanilla gradient descent, we instead use online stochastic gradient descent.  The reason for this is that dealing with unbounded distributions is significantly simpler with online SGD due to the ability to work with expectations rather than high-probability bounds.  It is straightforward to extend our results to vanilla gradient descent at the expense of a more involved proof by using methods from e.g.,~\citet{zhang2019relu}.

Below we present our result for unbounded distributions.  Its proof is similar to that of Theorem~\ref{thm:bounded} and can be found in Appendix \ref{appendix:unbounded}. 

\begin{theorem}\label{thm:unbounded}
Suppose $\calD_x$ is $C_m$-sub-exponential, and let $\E[\norm{x}^2]\leq B_X^2$.   Let $\ell$ be convex, $L$-Lipschitz, and decreasing with $0<\ell(0) \leq 1$.  Let $\eps_1, \gamma>0$ and $\eps_2\geq 0$ be arbitrary, and fix a step size $\eta \leq L^{-2} B_X^{-2} \eps_1/4$.  By running online SGD for $T = 2\eta^{-1} \eps_1^{-1} \gamma^{-2} [\ell^{-1}(\eps_2)]^{-2}$ iterations after initialization at the origin, SGD finds a point such that in expectation over $(x_1, \dots, x_T)\sim \calD^T$,
 \begin{align*}
 \E [\zeroone(w_{t})] \leq \nicefrac 1 {\ell(0)} \Big[ \phi(\gamma)+ \eps_1 + \eps_2 +\l(1 + C_m + L C_m \ell^{-1}(\eps_2) \gamma^{-1} \log(\nicefrac 1 \optz) \r) \optz \Big].
 \end{align*}
 \end{theorem}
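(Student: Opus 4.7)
The plan is to mirror the proof of Theorem \ref{thm:bounded}, swapping two ingredients. First, since $\calD_x$ is no longer bounded, the vanilla GD generalization bound from Theorem \ref{thm:linear.classif.obj} is replaced by a standard online SGD regret bound that only requires the second moment $\E \norm{x}^2 \leq B_X^2$; as the paragraph preceding the theorem notes, this is the chief reason for switching to online SGD. Second, Lemma \ref{lemma:surrogate.ub.by.zeroone} is replaced by its unbounded analogue, Lemma \ref{lemma:surrogate.ub.by.zeroone.unbounded}.

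For the SGD step, I would consider iterates $w_{t+1} = w_t - \eta g_t$ with stochastic gradient $g_t = \ell'(y_t w_t^\top x_t) y_t x_t$ computed on a fresh sample $(x_t, y_t)$, and any fixed comparator $v$. Expanding
\[ \norm{w_{t+1} - v}^2 = \norm{w_t - v}^2 - 2\eta \langle g_t, w_t - v\rangle + \eta^2 \norm{g_t}^2, \]
taking expectations conditional on $w_t$, using convexity of $\ell$ on the inner-product term, and bounding $\E\norm{g_t}^2 \leq L^2 \E\norm{x_t}^2 \leq L^2 B_X^2$ via $|\ell'|\leq L$, then telescoping over $t=0,\dots,T-1$ from $w_0 = 0$, gives the textbook bound
\[ \f{1}{T}\sum_{t=0}^{T-1} \E[\obj(w_t)] \leq \obj(v) + \f{\norm{v}^2}{2\eta T} + \f{\eta L^2 B_X^2}{2}. \]
Interpreting $w_t$ as the iterate at an index $t$ chosen uniformly from $\{0,\dots,T-1\}$, the left side equals $\E[\obj(w_t)]$. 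With $v = V\bar v$ for $V = \gamma^{-1}\ell^{-1}(\eps_2)$, the choice $\eta \leq L^{-2}B_X^{-2}\eps_1/4$ forces the variance term to be at most $\eps_1/2$, and the stated $T$ forces the initial-distance term $\norm{v}^2/(2\eta T)$ to be at most $\eps_1/2$ as well, yielding $\E[\obj(w_t)] \leq \obj(v) + \eps_1$.

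Next, applying Lemma \ref{lemma:surrogate.ub.by.zeroone.unbounded} at this $v = V\bar v$ gives, since $V\gamma = \ell^{-1}(\eps_2)$ forces $\ell(V\gamma) \leq \eps_2$,
\[ \obj(v) \leq \bigl(1 + C_m + L C_m \gamma^{-1}\ell^{-1}(\eps_2)\log(\nicefrac 1 \optz)\bigr)\optz + \phi(\gamma) + \eps_2. \]
Adding the $\eps_1$ slack from the SGD bound produces the claimed inequality for $\E[\obj(w_t)]$. A final Markov step $\zeroone(w) \leq \E[\ell(yw^\top x)]/\ell(0) \leq \obj(w)/\ell(0)$ (using $\ell$ decreasing and $\ell(0)\leq 1$) converts the surrogate bound into a zero-one bound, completing the argument after taking expectations over the randomness of the sampled index.

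The main conceptual obstacle compared to Theorem \ref{thm:bounded} is handling unbounded data in the optimization step: the uniform convergence bound driving Theorem \ref{thm:linear.classif.obj} does not extend, because $\ell(y w^\top x)$ can be arbitrarily large once $x$ is allowed to be heavy-tailed. Online SGD side-steps this cleanly because one only needs $\E \norm{g_t}^2 \leq L^2 B_X^2$ in expectation, not almost surely; the $L$-Lipschitz property of $\ell$ keeps the gradient magnitude proportional to $\norm{x}$ uniformly in $w_t$, so a second-moment assumption on $\calD_x$ suffices. The remaining pieces—Lemma \ref{lemma:surrogate.ub.by.zeroone.unbounded} and Markov—are essentially unchanged from the bounded case.
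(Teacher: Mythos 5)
Your proposal matches the paper's proof essentially step for step: the paper establishes exactly the online SGD regret bound you describe (Lemma \ref{lemma:sgd.subgaussian}, using $\E[\norm{g_t}^2\mid\calG_{t-1}]\leq L^2B_X^2$ and telescoping), then applies Lemma \ref{lemma:surrogate.ub.by.zeroone.unbounded} at $v=V\bar v$ with $V=\gamma^{-1}\ell^{-1}(\eps_2)$, and finishes with Markov's inequality. The only cosmetic difference is that the paper takes the best iterate $t<T$ rather than a uniformly random index, which is equivalent for this bound.
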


The above theorem yields the following bound for sub-exponential distributions satisfying $U$-anti-concentration.  Recall from Examples \ref{example:logconcave.isotropic.anticoncentration} and \ref{example:logconcave.concentration} that log-concave isotropic distributions are $O(1)$-sub-exponential and satisfy anti-concentration with $U =1$. 
\begin{corollary}\label{corollary:log.concave.isotropic}
Suppose $\calD_x$ is $C_m$-sub-exponential with $\E[\norm{x}^2]\leq B_X^2$ and assume $U$-anti-concentration holds.  Let $\ell$ be the logistic loss and let $\eps>0$.  Fix a step size $\eta\leq B_X^{-2} \eps/16$.  By running online SGD for $T = \tilde O(\eta^{-1} \eps^{-1} C_m U^{-1} \optz^{-1})$ iterations, there exists a point $w_t$, $t<T$, such that 
\[ \E[\zeroone(w_t)] \leq \tilde O\l( ( \nicefrac{C_m}U)^{1/2} \optz^{1/2}\r)  + \eps.\]
\end{corollary}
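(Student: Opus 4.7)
The plan is to specialize Theorem \ref{thm:unbounded} to the present setting by substituting the soft margin estimate coming from $U$-anti-concentration together with the explicit properties of the logistic loss. Three substitutions are needed: by Example \ref{example:anti.concentration}, the $U$-anti-concentration hypothesis yields the soft margin bound $\phi(\gamma) \leq 2U\gamma$ for every unit-norm $\bar v$, which in particular applies to a unit-norm zero-one population risk minimizer; for the logistic loss one has $L = 1$, $\ell(0)\leq 1$, and $\ell^{-1}(t) \leq \log(2/t)$, so the choice $\eps_2 = \optz$ gives $\ell^{-1}(\eps_2) = O(\log(1/\optz)) = \tilde O(1)$; and the step-size condition $\eta \leq L^{-2} B_X^{-2}\eps_1/4 = B_X^{-2}\eps/16$ required by Theorem \ref{thm:unbounded} is exactly the hypothesis when we take $\eps_1 = \eps$.

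Inserting these into the bound of Theorem \ref{thm:unbounded} produces an inequality of the form
\[
\E[\zeroone(w_t)] \;\leq\; \tilde O\bigl(C_m \gamma^{-1}\bigr)\optz \;+\; 2U\gamma \;+\; \eps \;+\; O(C_m \optz),
\]
where the $\tilde O$ absorbs one $\log(1/\optz)$ factor from $\ell^{-1}(\optz)$ and a second appearing explicitly in Theorem \ref{thm:unbounded}. The right-hand side is now a standard trade-off in $\gamma$: the dominant $\gamma$-dependent terms are $C_m \gamma^{-1}\optz$ and $U\gamma$, and choosing $\gamma$ to balance these two pieces simultaneously yields both the claimed risk bound of order $\tilde O((C_m/U)^{1/2}\optz^{1/2}) + \eps$ and, upon substitution into the formula $T = 2\eta^{-1}\eps_1^{-1}\gamma^{-2}[\ell^{-1}(\eps_2)]^{-2}$ from Theorem \ref{thm:unbounded}, the advertised iteration complexity $T = \tilde \Omega(\eta^{-1}\eps^{-1} C_m U^{-1}\optz^{-1})$.

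Because Theorem \ref{thm:unbounded} already carries out all of the SGD analysis for sub-exponential data, there is no substantive obstacle to the corollary: it is essentially an algebraic manipulation. The only care required is in the bookkeeping of constants and logarithms. Specifically, one must track the two $\log(1/\optz)$ factors hidden in $\tilde O$, verify that the residual $O(C_m \optz)$ contribution (from both the $(1+C_m)\optz$ piece of Theorem \ref{thm:unbounded} and the $\eps_2 = \optz$ slack) is dominated by the balanced $\tilde O(\sqrt{\optz})$-order term for small $\optz$ and otherwise absorbed into the additive $\eps$ slack, and finally confirm that the value of $\gamma$ which optimizes the risk bound is indeed the one whose inverse square matches the stated $T$. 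No new probabilistic or optimization argument is required beyond Theorem \ref{thm:unbounded} and Example \ref{example:anti.concentration}.
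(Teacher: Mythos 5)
Your proposal is correct and follows essentially the same route as the paper: substitute $\phi(\gamma)\leq 2U\gamma$ from Example \ref{example:anti.concentration} and $\eps_2=\optz$ with $\ell^{-1}(\optz)=O(\log(1/\optz))$ into Theorem \ref{thm:unbounded}, then balance $C_m\gamma^{-1}\optz$ against $U\gamma$ to get $\gamma = (C_m\optz/U)^{1/2}$ and the stated $T$. The only blemish is the claimed equality $L^{-2}B_X^{-2}\eps_1/4 = B_X^{-2}\eps/16$ (with $L=1$, $\eps_1=\eps$ this is $B_X^{-2}\eps/4$), but since the corollary's step size $\eta\leq B_X^{-2}\eps/16$ is more restrictive, the theorem's hypothesis is still met and nothing breaks.
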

\begin{proof}
By Example \ref{example:anti.concentration}, $\phi(\gamma) \leq 2 \gamma U$.  Since $\ell^{-1}(\eps)\in [\log(1/2\eps), \log(2/\eps)]$, we can take $\eps_2 = \optz$ in Theorem~\ref{thm:unbounded} to get
\begin{align*}
\E[\zeroone(w_t)] \leq \nicefrac{1}{\log(2)} \Big[2\gamma U + \eps +\l(2 + C_m + L C_m \gamma^{-1} \log^2(\nicefrac {2} \optz) \r) \optz \Big].
\end{align*}
This bound is optimized when $U \gamma = C_m \gamma^{-1} \optz$, i.e.,  $\gamma = U^{-1/2} C_m^{1/2} \optz^{\f 12}$.  Substituting this value for $\gamma$ we get the desired bound with $T = 2\log(2) \eta^{-1} \eps^{-1} C_m U^{-1} \optz^{-1} \log^{2}(\nicefrac{1}{2\optz})$.
\end{proof}

\begin{remark}
\citet[Theorem 1.4]{diakonikolas2020nonconvex} recently showed that if the marginal of $\calD$ over $x$ is the standard Gaussian in $d$ dimensions, for every convex, non-decreasing loss $\ell$, the minimizer $v = \mathrm{argmin}_w \obj(w)$ satisfies $\zeroone(v) = \Omega(\optz\sqrt{\log(\nicefrac 1 \optz)})$.  Thus, there is a large gap between our upper bound of $\tilde O(\opt^{1/2})$ and their corresponding lower bound.  We think it is an interesting question if either the lower bound or the upper bound could be sharpened.

We also wish to note that~\citet{diakonikolas2020nonconvex} showed that by using gradient descent on a certain bounded and decreasing non-convex surrogate for the zero-one loss, it is possible to show that  gradient descent finds a point with $\zeroone(w_T)\leq O(\optz) + \eps$.  In comparison with our result, this is perhaps not surprising: if one is able to show that gradient descent with a \emph{bounded} and decreasing loss function can achieve population risk bounded by $O(\E[\ell(y v^\top x)])$ for arbitrary $v\in \R^d$, then the same proof technique that yields Theorem \ref{thm:unbounded} from Lemma \ref{lemma:surrogate.ub.by.zeroone.unbounded} would demonstrate that $\zeroone(w_t)\leq O(\optz)$.  Since the only globally bounded convex function is constant, this approach would require working with a non-convex loss.
\end{remark}

\section{Conclusion and Future Work}\label{sec:conclusion}
In this work we analyzed the problem of learning halfspaces in the presence of agnostic label noise.  We showed that the simple approach of gradient descent on convex surrogates for the zero-one loss (such as the cross entropy or hinge losses) can yield approximate minimizers for the zero-one loss for both hard margin distributions and sub-exponential distributions satisfying an anti-concentration inequality enjoyed by log-concave isotropic distributions.  Our results match (up to logarithmic factors) lower bounds shown for hard margin distributions.  For future work, we are interested in exploring the utility of the soft margin for understanding other classification problems.

\section*{Acknowledgements}
We thank Peter Bartlett for helpful comments that led us to the result on fast rates for stochastic gradient descent.

\appendix

\section{Fast Rates with Stochastic Gradient Descent}\label{appendix:sgd.fast.rate}
In Theorem \ref{thm:linear.classif.obj}, we showed that $\obj(w_T) \leq \obj(v) + O(1/\sqrt n)$ given $n$ samples from $\calD$ by using vanilla (full-batch) gradient descent.  In this section we demonstrate that by instead using stochastic gradient descent, one can achieve $\obj(w_T) \leq O(\obj(v)) + O(1/n)$ by appealing to a martingle Bernstein bound.  We note that although the population risk guarantee degrades from $\obj(v)$ to $O(\obj(v))$, our bounds for the zero-one risk in vanilla gradient descent already have constant-factor errors and so the constant-factor error for $\obj(v)$ will not change the order of our final bounds.

The version of stochastic gradient descent that we study is the standard online SGD.  Suppose we sample $z_t = (x_t, y_t)\iid \calD$ for $t=1,\dots, T$, and let us denote the $\sigma$-algebra generated by the first $t$ samples as $\calG_t = \sigma(z_1, \dots, z_t)$.  Define
\[ \hat F_t(w) := \ell(y_t w^\top x_t),\quad \E[\hat F_t(w_t) | \calG_{t-1}] = F(w_t) = \E_{(x,y)\sim \calD} \ell(y w_t^\top x).\]
The online stochastic gradient descent updates take the form
\[ w_{t+1} := w_t - \eta \nabla \hat F_t(w_t).\]
We are able to show an improved rate of $O(\eps^{-1})$ when using online SGD.
\begin{theorem}[Fast rate for online SGD]\label{thm:online.sgd}
Assume that $\ell(\cdot) \geq 0$ is convex, strictly decreasing, $L$-Lipschitz and $H$-smooth.  Assume $\norm{x}\leq B_X$ a.s.  For simplicity assume that $w_0=0$.  Let $v\in \R^d$ be arbitrary with $\norm{v}\leq V$.  Let $\eta \leq (32 H B_X^2)^{-1}$.  Then for any $\eps, \delta >0$, by running online stochastic gradient descent for $T = O(\eps^{-1} V^2\log(1/\delta))$ iterations, with probability at least $1-\delta$ there exists a point $w_{t^*}$, with $t^*<T$, such that
\[ \zeroone(w_{t^*}) \leq O(\E[\ell(y v^\top x)]) + \eps,\]
where $O(\cdot)$ hides constant factors that depend on $L$, $H$ and $B_X$ only. 
\end{theorem}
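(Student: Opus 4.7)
The plan is to combine the standard smoothness-based SGD analysis with a Freedman-style martingale Bernstein inequality, in order to upgrade an in-expectation bound on $F(w_t)$ to a high-probability bound despite the fact that $\hat F_t(w_t)$ is a priori unbounded. Since $\ell\geq 0$ is $H$-smooth, the self-bounding gradient property $|\ell'(z)|^2\leq 2H\ell(z)$ holds, which gives $\norm{\nabla \hat F_t(w)}^2\leq 2HB_X^2 \hat F_t(w)$. Expanding $\norm{w_{t+1}-v}^2$, using convexity of $\hat F_t$, and absorbing the stochastic-gradient-squared term using $\eta\leq(32HB_X^2)^{-1}$ yields the one-step inequality $\eta \hat F_t(w_t)\leq \norm{w_t-v}^2-\norm{w_{t+1}-v}^2+2\eta \hat F_t(v)$. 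Telescoping with $w_0=0$ gives
$$\frac{1}{T}\sum_{t=0}^{T-1}\hat F_t(w_t)\leq \frac{V^2}{\eta T}+\frac{2}{T}\sum_{t=0}^{T-1}\hat F_t(v).$$

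For the right-hand side, each $\hat F_t(v)=\ell(y_t v^\top x_t)$ lies in $[0,\ell(0)+LVB_X]$ (using that $\ell$ is decreasing and $L$-Lipschitz), and its conditional variance is at most $(\ell(0)+LVB_X)F(v)$; Bernstein's inequality therefore yields $\tfrac{1}{T}\sum_t\hat F_t(v)\leq 2F(v)+O(\log(1/\delta)/T)$ with probability $\geq 1-\delta/3$, where the $O$ hides the $L,V,B_X$ constants. For the left-hand side I want to replace $\hat F_t(w_t)$ by $F(w_t)$ via martingale concentration, but the differences $F(w_t)-\hat F_t(w_t)$ are not a priori bounded since $w_t$ is not. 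The remedy is to first establish, with probability $\geq 1-\delta/3$, a uniform bound $\max_{t<T}\norm{w_t}\leq R$: dropping the negative term from the SGD recursion gives $\norm{w_t-v}^2\leq V^2+2\eta\sum_{s<t}\hat F_s(v)$, and inserting the Bernstein bound on $\sum_s\hat F_s(v)$ (applied for every $t$, or uniformly via a maximal inequality) yields $R=O\bigl(V+\sqrt{\eta T F(v)+\eta\log(1/\delta)}\bigr)$.

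On this bounded-iterate event, $\hat F_t(w_t)\leq \ell(0)+LRB_X=:M$, so the martingale differences $F(w_t)-\hat F_t(w_t)$ are bounded by $M$ in absolute value, and their conditional variance satisfies $\E[\hat F_t(w_t)^2\mid \calG_{t-1}]\leq M F(w_t)$ since $\hat F_t(w_t)\in[0,M]$. Freedman's inequality then gives $\sum_t F(w_t)\leq \sum_t \hat F_t(w_t)+O\bigl(\sqrt{M\log(1/\delta)\sum_t F(w_t)}+M\log(1/\delta)\bigr)$ with probability $\geq 1-\delta/3$, which by AM-GM simplifies to $\sum_t F(w_t)\leq 2\sum_t \hat F_t(w_t)+O(M\log(1/\delta))$. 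Chaining the three high-probability bounds and dividing by $T$ yields
$$\min_{t<T} F(w_t)\leq \frac{2V^2}{\eta T}+O(F(v))+O\!\left(\frac{M\log(1/\delta)}{T}\right),$$
and the choice $T=\Theta(V^2\log(1/\delta)/\eps)$ drives every non-$F(v)$ term below $\eps$ (since $M$ depends polynomially on $V$ and only polylogarithmically on $T$ through $R$). Taking $t^*$ to be the index minimizing $F(w_t)$ for $t<T$ and applying Markov $\zeroone(w_{t^*})\leq F(w_{t^*})/\ell(0)$ via $\ind(z<0)\leq \ell(z)/\ell(0)$ for decreasing $\ell$ completes the argument.

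The main obstacle is the last concentration step: the unbounded-in-principle values $\hat F_t(w_t)$ force the ``technical workaround'' alluded to in the remark preceding the theorem, which requires both the uniform iterate bound above and careful tracking of how the random radius $R$ (and hence $M$) enters the Freedman estimate so that only lower-order terms survive in the final rate, while the leading $O(F(v))+\eps$ behaviour is preserved.
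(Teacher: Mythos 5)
Your route differs from the paper's in the key concentration step, and it contains one claim that is false as stated. The paper never tries to concentrate $\hat F_t(w_t)$ around $F(w_t)$ at all: instead it applies the martingale Bernstein inequality to the sequence $\{[\ell'(y_t w_t^\top x_t)]^2\}$ (Lemma \ref{lemma:sgd.trajectory.concentration}), whose terms are bounded by $L^2$ \emph{uniformly in $\norm{w_t}$} because $\ell$ is Lipschitz; the self-bounding property $[\ell'(z)]^2\leq 4H\ell(z)$ of nonnegative smooth losses then links $\sum_t[\ell'(y_tw_t^\top x_t)]^2$ back to $\sum_t\hat F_t(w_t)$, and Markov's inequality is applied to the decreasing function $[\ell'(\cdot)]^2$ rather than to $\ell$ itself. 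This completely sidesteps the need for a high-probability uniform bound on $\norm{w_t}$, which is exactly the obstacle your proof spends its effort on.

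The concrete gap in your argument is the assertion that $M$ ``depends polynomially on $V$ and only polylogarithmically on $T$ through $R$.'' Your own bound gives $R=O\bigl(V+\sqrt{\eta T F(v)+\eta\log(1/\delta)}\bigr)$, which grows like $\sqrt{T}$ whenever $F(v)>0$, so $M=\ell(0)+LRB_X$ is \emph{not} polylogarithmic in $T$. The step can be repaired: the offending error term is
\[ \frac{M\log(1/\delta)}{T}\;=\;O\!\left(\frac{(1+LVB_X)\log(1/\delta)}{T}\right)+O\!\left(LB_X\log(1/\delta)\sqrt{\frac{\eta F(v)}{T}}\right), \]
and AM--GM turns $\sqrt{F(v)/T}$ into $F(v)+1/T$ up to constants, so the leading behaviour $O(F(v))+\eps$ survives. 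But after substituting $T=\Theta(V^2\log(1/\delta)/\eps)$ the coefficient of $F(v)$ acquires a factor $\sqrt{\log(1/\delta)}/V$, so the constant hidden in $O(F(v))$ is no longer a function of $L,H,B_X$ alone as the theorem requires; you would need either an extra assumption ($V\gtrsim\sqrt{\log(1/\delta)}$), a larger $T$, or a more careful self-normalized argument to remove it. You should also make explicit the union bound over $t<T$ needed to make the radius bound $R$ hold simultaneously for all iterates (costing a $\log(T/\delta)$), and verify the constant $2H$ versus $4H$ in the self-bounding inequality you invoke (the paper uses $[\ell'(z)]^2\leq 4H\ell(z)$, citing \citet{srebro2010smoothness}). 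None of these is fatal, but as written the final accounting does not establish the theorem with the stated constants.
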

In this section we will sketch the proof for the above theorem.  First, we note the following guarantee for the empirical risk.  This result is a standard result in online convex optimization (see, e.g., Theorem 14.13 in~\citet{shalevshwartz}). 
\begin{lemma}\label{lemma:sgd.empirical.error.bound}
Suppose that $\ell(\cdot)\geq 0$ is convex and $H$-smooth, and that $\norm{x}\leq B_X$ a.s.  Then for any $\alpha\in (0,1)$, for fixed step size $\eta \leq \alpha/(8 H B_X^2)$, and for any $T \geq 1$, it holds that
\[  \f 1 T \summm t 0 {T-1} \hat F_t(w_t)   \leq (1+\alpha) \f 1 T \summm t 0 {T-1} \hat F_t(v) + \f{\norm{w_0-v}^2}{\eta T}.\]
\end{lemma}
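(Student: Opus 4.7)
The plan is to run the textbook online gradient descent potential argument, with one key twist coming from the self-bounding property of smooth nonnegative losses. The deterministic fact I would build everything around is: if $\ell\ge 0$ is $H$-smooth, then $(\ell'(z))^2\le 2H\,\ell(z)$ for every $z$, which follows from the one-step Taylor bound $0\le \ell(z-\ell'(z)/H)\le \ell(z)-(\ell'(z))^2/(2H)$ applied at any point. Combined with $\|x_t\|\le B_X$ almost surely, this gives the per-sample gradient bound $\|\nabla \hat F_t(w)\|^2 \le 2HB_X^2\,\hat F_t(w)$, which is what lets us absorb the quadratic gradient term into a small multiplicative factor on the loss itself rather than into a crude $L^2$ constant.

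With that in hand, I would carry out the usual potential expansion. From $w_{t+1}=w_t-\eta\nabla\hat F_t(w_t)$ and convexity of $\hat F_t$ (via $\langle\nabla\hat F_t(w_t),w_t-v\rangle\ge \hat F_t(w_t)-\hat F_t(v)$), one gets
\[ \|w_{t+1}-v\|^2 \le \|w_t-v\|^2 - 2\eta\bigl(\hat F_t(w_t)-\hat F_t(v)\bigr) + \eta^2\|\nabla\hat F_t(w_t)\|^2. \]
Substituting the self-bounding estimate on $\|\nabla\hat F_t(w_t)\|^2$ and collecting terms on $\hat F_t(w_t)$ yields the single-step inequality
\[ 2\eta\,(1-\eta HB_X^2)\,\hat F_t(w_t) \le \|w_t-v\|^2 - \|w_{t+1}-v\|^2 + 2\eta\,\hat F_t(v). \]

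Finally I would telescope this over $t=0,\dots,T-1$, drop the nonnegative $\|w_T-v\|^2$ on the right, divide by $2\eta T$, and rearrange to get
\[ (1-\eta HB_X^2)\cdot\frac{1}{T}\sum_{t=0}^{T-1}\hat F_t(w_t) \le \frac{1}{T}\sum_{t=0}^{T-1}\hat F_t(v) + \frac{\|w_0-v\|^2}{2\eta T}. \]
The step-size assumption $\eta\le\alpha/(8HB_X^2)$ forces $\eta HB_X^2\le\alpha/8$, and a direct check shows $(1-\alpha/8)^{-1}\le 1+\alpha$ for every $\alpha\in(0,1)$ (since $(1-\alpha/8)(1+\alpha)=1+7\alpha/8-\alpha^2/8\ge 1$ when $\alpha\le 7$). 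Multiplying through by $(1-\eta HB_X^2)^{-1}$ and using $(1+\alpha)/2\le 1$ keeps the displacement term bounded by $\|w_0-v\|^2/(\eta T)$, giving the claim. I do not anticipate any real obstacle: the argument is entirely deterministic and algebraic, with the self-bounding lemma being the only nonstandard ingredient — the one thing worth watching is that the specific constant $8$ in the step size is chosen precisely to produce the clean $(1+\alpha)$ multiplier rather than an inconvenient one.
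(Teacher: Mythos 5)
Your proof is correct, and it is exactly the standard smooth-loss online gradient descent argument (self-bounding property $(\ell'(z))^2\le 2H\ell(z)$ plus the potential $\|w_t-v\|^2$) that the paper invokes by citing Theorem 14.13 of Shalev-Shwartz and Ben-David rather than writing out. The constant bookkeeping checks out: $\eta H B_X^2\le \alpha/8$ gives $(1-\alpha/8)^{-1}\le 1+\alpha$ for $\alpha\in(0,1)$, and the factor $(1+\alpha)/2\le 1$ absorbs the extra multiplier on the $\|w_0-v\|^2/(2\eta T)$ term.
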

From here, one could take expectations and show that in expectation over the randomness of SGD, the population risk found by gradient descent is at most $(1+\alpha)\obj(v) + O(1/T)$, but we are interested in developing a generalization bound that has the same fast rate but holds with high probability, which requires significantly more work.  Much of the literature for fast rates in stochastic optimization require additional structure to achieve such results:~\citet{bartlett2006convexity} showed that the empirical risk minimizer converges at a fast rate to its expectation under a low-noise assumption;~\citet{sridharan2009fast} achieved fast rates for the output of stochastic optimization by using explicit regularization by a strongly convex regularizer;~\citet{srebro2010smoothness} shows that projected online SGD achieves fast rates when $\min_v\E[\ell(y v^\top x)]=0$.   By contrast, we show below that the standard online SGD algorithm achieves a constant-factor approximation to the best population risk at a fast rate.   We do so by appealing to the following martingale Bernstein inequality.

\begin{lemma}[\citet{beygelzimer}, Theorem 1]\label{lemma:beygelzimer}
Let $\{Y_t\}$ be a martingale adapted to the filtration $\calG_t$, and let $Y_0=0$.  Let $\{D_t\}$ be the corresponding martingale difference sequence.  Fix $T>0$, and define the sequence of conditional variance
\[ U_{T-1} := \sum_{t < T} \E[D_t^2 | \calG_{t-1}],\]
and assume that $D_t \leq R$ almost surely.  Then for any $\delta \in (0,1)$, with probability greater than $1-\delta$,
\[ Y_{T-1} \leq R \log(1/\delta) + (e-2)U_{T-1}/R.\]
\end{lemma}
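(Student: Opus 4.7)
The plan is to prove this as a one-sided martingale version of Bennett's/Bernstein's inequality via the standard exponential-supermartingale recipe. At a high level, I will (i) control the conditional moment generating function of each $D_t$ using the one-sided boundedness $D_t \leq R$, (ii) assemble these bounds into a non-negative supermartingale that compares the running sum $Y_t$ against its running conditional variance $U_t := \sum_{s\leq t}\E[D_s^2\mid \calG_{s-1}]$, and (iii) extract the high-probability bound by Markov's inequality with a carefully chosen fixed parameter $\lambda$. The key design choice is to fix $\lambda = 1/R$ in advance; this is dictated by the one-sided bound $D_t\leq R$ and is precisely what allows us to avoid having to ``optimize'' a parameter against the random quantity $U_{T-1}$, which is what distinguishes this statement from a classical deterministic Bernstein bound.

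The calculus input is the elementary inequality
\[ e^{x} \leq 1 + x + (e-2)x^2 \quad \text{for all } x \leq 1, \]
which I would verify briefly by checking that $f(x) := 1+x+(e-2)x^2 - e^x$ satisfies $f(0)=f(1)=0$, is concave on $[0,1]$ after its single inflection, and is non-negative for $x \leq 0$ (using the Taylor series comparison). Setting $\lambda := 1/R$, so that $\lambda D_t \leq 1$ almost surely, applying this to $x = \lambda D_t$ and taking the conditional expectation given $\calG_{t-1}$ yields
\[ \E\bigl[e^{\lambda D_t}\,\big|\,\calG_{t-1}\bigr] \leq 1 + \lambda\,\E[D_t\mid \calG_{t-1}] + (e-2)\lambda^2\,\E[D_t^2\mid \calG_{t-1}] = 1 + (e-2)\lambda^2\,\E[D_t^2\mid \calG_{t-1}], \]
where the last equality uses that $\{D_t\}$ is a martingale difference sequence. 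Bounding $1+u \leq e^u$ gives
\[ \E\bigl[e^{\lambda D_t}\,\big|\,\calG_{t-1}\bigr] \leq \exp\!\bigl((e-2)\lambda^2\,\E[D_t^2\mid \calG_{t-1}]\bigr). \]

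Next I would define
\[ M_t := \exp\!\bigl(\lambda Y_t - (e-2)\lambda^2 U_t\bigr), \qquad M_0 = 1, \]
and check that $\{M_t\}$ is a non-negative supermartingale with respect to $\{\calG_t\}$: conditioning on $\calG_{t-1}$, the factor $\exp(\lambda Y_{t-1} - (e-2)\lambda^2 U_t)$ is $\calG_{t-1}$-measurable (since $U_t$ depends only on conditional second moments through time $t$, which are $\calG_{t-1}$-measurable), and the remaining factor $\E[e^{\lambda D_t}\mid \calG_{t-1}]$ is controlled by the previous display so that $\E[M_t\mid \calG_{t-1}]\leq M_{t-1}$. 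In particular $\E[M_{T-1}]\leq 1$, so by Markov's inequality, for any $s>0$,
\[ \P\!\left(\lambda Y_{T-1} - (e-2)\lambda^2 U_{T-1} \geq s\right) = \P\!\left(M_{T-1} \geq e^{s}\right) \leq e^{-s}\,\E[M_{T-1}] \leq e^{-s}. \]
Setting $s = \log(1/\delta)$, substituting $\lambda = 1/R$, and solving for $Y_{T-1}$ rearranges to $Y_{T-1} \leq R\log(1/\delta) + (e-2)U_{T-1}/R$ on an event of probability at least $1-\delta$, which is the claim.

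The main subtlety, and the reason we cannot simply invoke a deterministic Bernstein bound, is that $U_{T-1}$ is itself a random variable adapted to $\calG_{T-1}$; a post-hoc optimization over $\lambda$ would be illegal since $\lambda$ must be $\calG_0$-measurable for the exponential supermartingale construction to go through. Committing to $\lambda = 1/R$ in advance circumvents this at the price of the particular $R\log(1/\delta) + (e-2)U_{T-1}/R$ trade-off in the conclusion. Everything else is routine: checking the supermartingale property cleanly, verifying that $U_t$ is $\calG_{t-1}$-measurable so that the factorization in the conditioning step is valid, and ensuring that the one-sided bound $x\leq 1$ regime of the calculus inequality is exactly what the assumption $D_t\leq R$ supplies (no two-sided bound is needed, which is the signature feature of this result).
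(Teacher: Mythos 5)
Your proof is correct and is precisely the argument behind the cited result: the paper does not prove this lemma but imports it from Beygelzimer et al.'s Theorem 1, whose proof is exactly your exponential-supermartingale construction with the fixed choice $\lambda = 1/R$ and the inequality $e^x \leq 1 + x + (e-2)x^2$ for $x \leq 1$. All the steps check out, including the $\calG_{t-1}$-measurability of $U_t$ and the final Markov step, so there is nothing to add.
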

We would like to take $Y_t = \sum_{\tau < t}  [ F(w_t) - \hat F_t(w_t)]$, which has martingale difference sequence $D_t=F(w_t) - \hat F_t(w_t)$.  The difficulty here is showing that $D_t\leq R$ almost surely for some absolute constant $R$.  The obvious fix would be to show that the weights $w_t$ stay within a bounded region throughout gradient descent via early stopping.  In the case of full-batch gradient descent, this is indeed possible: in Lemma \ref{lemma:linear.classif.empirical.risk.bound} we showed that $\norm{w_t-v}\leq \norm{w_0-v}$ throughout gradient descent, which would imply that $\ell(y w_t^\top x)$ is uniformly bounded for all samples $x$ throughout G.D., in which case $D_t\leq F(w_t)$ would hold almost surely.  But for online stochastic gradient descent, since we must continue to take draws from the distribution in order to reduce the optimization error, there isn't a straightforward way to get a bound on $\norm{w_t}$ to hold almost surely throughout the gradient descent trajectory.

Our way around this is to realize that in the end, our end goal is to show something of the form
\[ \zeroone(w_t) \leq O(\E[\ell(y v^\top x)]) + O(1/T),\]
since then we could use a decomposition similar to Lemma \ref{lemma:surrogate.ub.by.zeroone} to bound the right hand side by terms involving $\optz$ and a soft margin function.  Since for a non-negative $H$-smooth loss $[\ell'(z)]^2 \leq 4H \ell(z)$ holds, it actually suffices to show that the losses $\{[\ell'(y_t w^\top x_t)]^2\}_1^T$ concentrate around their expectation at a fast rate.  Roughly, this is because one would have
\begin{align} \nonumber
\min_{t<T} \E_{\calD}\l([\ell'(y w_{t}^\top x)]^2\r) &\leq \f 1 T \summm t 0 {T-1} [\ell'(y_t w_t^\top x_t)^2] + O(1/T) \\
\nonumber
&\leq \f {4H} T \summm t 0 {T-1} \ell(y_t w_t^\top x_t) + O(1/T) \\
&\leq  \f {4H} T \summm t 0 {T-1} \ell(y_t v^\top x_t) + O(1/T).
\label{eq:sgd.rough.idea}
\end{align}
To finish the proof we can then use the fact that $v$ is a fixed vector of constant norm to show that the empirical risk on the last line of \eqref{eq:sgd.rough.idea} concentrates around $O( \E[\ell(y v^\top x)])$ at rate $O(1/T)$.  For decreasing and convex loss functions, $\ell'(z)^2$ is decreasing so the above provides a bound for $\zeroone(w_t)$ by Markov's inequality.  

This shows that the key to the proof is to show that $\{\ell'(y_t w_t x_t)^2\}$ concentrates at rate $O(1/T)$.  The reason this is easier than showing concentration of $\{\ell(y_t w_t x_t)\}$ is because for Lipschitz losses, $\ell'(y_t w_t^\top x_t)^2$ is uniformly bounded regardless of the norm of $w_t$.  This ensures that the almost sure condition needed for the martingale difference sequence in Lemma \ref{lemma:beygelzimer} holds trivially.  We note that a similar technique has been utilized before for the analysis of SGD~\citep{jitelgarsky2019.polylog,cao2020generalization,frei2019resnet}, although in these settings the authors used the concentration of $\{\ell'(z_t)\}$ rather than $\{\ell'(z_t)^2\}$ since they considered the logistic loss, for which $|\ell'(z)| \leq \ell(z)$.  Since not all smooth loss functions satisfy this inequality, we instead use concentration of $\{\ell'(z_t)^2\}$. 

Below we formalize the above proof sketch.  We first show that $\{ \ell'(y_t w_t^\top x_t)^2\}$ concentrates at rate $O(1/T)$ for any fixed sequence of gradient descent iterates $\{w_t\}$.
\begin{lemma}\label{lemma:sgd.trajectory.concentration}
Let $\ell$ be any differentiable $L$-Lipschitz function, and let $z_t=(x_t,y_t)\iid \calD$.  Denote $\calG_t = \sigma(z_1, \dots, z_{t})$ the $\sigma$-algebra generated by the first $t$ draws from $\calD$, and let $\{w_t\}$ be any sequence of random variables such that $w_t$ is $\calG_{t-1}$-measurable for each $t$.  Then for any $\delta>0$, with probability at least $1-\delta$,
\begin{equation} 
\f 1 T \summm t 0 {T-1} \E_{(x,y)\sim \calD} \l(\l[\ell'(y w_t^\top x)\r]^2\r) \leq \f 4 T \summm t 0 {T-1} \l[\ell'(y_t w_t^\top x_t)\r]^2 + \f{ 4L^2 \log(1/\delta)}{T}.
\label{eq:sgd.trajectory.concentration}
\end{equation}
\end{lemma}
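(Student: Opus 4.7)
The plan is to apply Beygelzimer's martingale Bernstein inequality (Lemma~\ref{lemma:beygelzimer}) to the martingale difference sequence obtained by comparing the conditional expectation of $[\ell'(y w_t^\top x_t)]^2$ to its realized value. The key insight that makes this work is that $[\ell'(\cdot)]^2$ is uniformly bounded by $L^2$ regardless of the magnitude of $w_t$, so we do not need any almost-sure control on the iterates; this is precisely why we chose to concentrate $[\ell'(z_t)]^2$ rather than $\ell(z_t)$ itself.

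Concretely, I would define
\[ D_t := \E\bigl[[\ell'(y w_t^\top x)]^2 \,\big|\, \calG_{t-1}\bigr] - [\ell'(y_t w_t^\top x_t)]^2, \qquad Y_s := \summm{t}{0}{s} D_t. \]
Since $w_t$ is $\calG_{t-1}$-measurable and $(x_t,y_t)$ is independent of $\calG_{t-1}$, the sequence $\{D_t\}$ is a martingale difference adapted to $\{\calG_t\}$. Writing $E_t := \E[[\ell'(y w_t^\top x)]^2 \mid \calG_{t-1}]$ and $S_t := [\ell'(y_t w_t^\top x_t)]^2$, the $L$-Lipschitz assumption gives $S_t \leq L^2$ and hence $E_t \leq L^2$, so $D_t \leq E_t \leq L^2$ almost surely. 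That is the ``$R$'' in Lemma~\ref{lemma:beygelzimer}, and it is the step that replaces the usual need to bound the iterates.

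For the conditional variance I would use $\E[D_t^2 \mid \calG_{t-1}] \leq \E[S_t^2 \mid \calG_{t-1}] \leq L^2 E_t$, since $[\ell'(\cdot)]^4 \leq L^2 [\ell'(\cdot)]^2$. Summing, $U_{T-1} \leq L^2 \summm{t}{0}{T-1} E_t$. Plugging $R = L^2$ into Lemma~\ref{lemma:beygelzimer} then yields, with probability at least $1-\delta$,
\[ \summm{t}{0}{T-1} \bigl(E_t - S_t\bigr) \;\leq\; L^2 \log(1/\delta) + (e-2) \summm{t}{0}{T-1} E_t. \]

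The final step is a routine rearrangement: move the $E_t$ term on the right to the left to obtain $(3-e) \summm{t}{0}{T-1} E_t \leq \summm{t}{0}{T-1} S_t + L^2 \log(1/\delta)$, then use $1/(3-e) < 4$ to absorb the constant into the factor $4$ advertised in the statement, and finally divide both sides by $T$. I do not expect any real obstacle here; the only thing to verify carefully is the martingale difference property (which follows from the $\calG_{t-1}$-measurability of $w_t$) and the ``self-bounding'' variance inequality $\E[D_t^2 \mid \calG_{t-1}] \leq L^2 E_t$, which is what produces the Bernstein-type fast rate instead of the slower $1/\sqrt{T}$ rate one would get from Azuma--Hoeffding.
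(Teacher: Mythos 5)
Your proposal is correct and follows essentially the same route as the paper's proof: the same martingale difference sequence, the same uniform bound $R = L^2$ from Lipschitzness, the same self-bounding variance estimate $\E[D_t^2\mid\calG_{t-1}]\leq L^2 E_t$ via $[\ell']^4\leq L^2[\ell']^2$, and the same application of Lemma~\ref{lemma:beygelzimer} followed by rearrangement with $(3-e)^{-1}<4$. No gaps.
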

\begin{proof}
For simplicity, let us denote
\[ J(w) := \E_{(x,y)\sim \calD} \l(\l[\ell'(y w^\top x)\r]^2\r), \quad \hat J_t(w) := \l[\ell'(y_t w^\top x_t)\r]^2.\]
We begin by showing the second inequality in \eqref{eq:sgd.trajectory.concentration}.  Define the random variable
\begin{equation}
Y_t := \sum_{\tau < t} (J(w_\tau) - \hat J_\tau(w_\tau))
\label{eq:sgd.yt.defn}
\end{equation}
Then $Y_t$ is a martingale with respect to the filtration $\calG_{t-1}$ with martingale difference sequence $D_t := J(w_t) - \hat J_t(w_t)$.  We need bounds on $D_t$ and on $\E[D_t^2 | \calG_{t-1}]$ in order to apply Lemma \ref{lemma:beygelzimer}. 
Since $\ell$ is $L$-Lipschitz,
\begin{equation}\nonumber
D_t \leq J(w_t) = \E_{(x,y)\sim \calD}\l([-\ell' (y v^\top x)]^2\r) \leq L^2.
\label{eq:sgd.wt.Dt.ub}
\end{equation}
Similarly, 
\begin{align}
\nonumber
\E[\hat J_t(w_t)^2 | \calG_{t-1}] &=  \E\l( \l[ \ell'(y_t w_t^\top x_t)\r]^4 | \calG_{t-1} \r) \\
\nonumber
&\leq L^2 \E\l(\l[\ell'(y_t w_t^\top x_t) \r]^2| \calG_{t-1}\r)\\
\label{eq:sgd.wt.squaredincrement:Ft}
&= L^2 J(w_t).
\end{align}
In the inequality we use that $\ell$ is $L$-Lipschitz, so that $|\ell'(\alpha)| \leq L$.  We then can use \eqref{eq:sgd.wt.squaredincrement:Ft} to bound the squared increments,
\begin{align}
\nonumber
\E[D_t^2 | \calG_{t-1}] &= J(w_t)^2 - 2 J(w_t)\E[\hat J_t(w_t) | \calG_{t-1}] + \E [\hat J_t(w_t)^2 | \calG_{t-1}]\\
\nonumber
&\leq \E[\hat J_t(w_t)^2 | \calG_{t-1}]\\
\nonumber
\label{eq:sgd.wt.squaredincrement:Dt}
&\leq L^2J(w_t).
\end{align}
This allows for us to bound
\begin{equation}
U_{T-1} = \summm t 0 {T-1} \E[D_t^2 | \calG_{t-1}] \leq L^2 \summm t 0 {T-1} J(w_t).\label{eq:sgd.Ut.bound}\nonumber
\end{equation}
Lemma \ref{lemma:beygelzimer} thus implies that with probability at least $1-\delta$, we have
\begin{align}
\summm t 0 {T-1} (J(w_t) - \hat J_t(w_t)) \leq L^2 \log(1/\delta) +  (\exp(1)-2) \summm t 0 {T-1} J(w_t).
\label{eq:beygelzimer.consequence}\nonumber
\end{align}
Using that $(1-\exp(1)+2)^{-1}\leq 4$, we divide each side by $T$ and get
\begin{equation}\nonumber
\f 1 T \summm 0 t {T-1} J(w_t) \leq \f 4 T \summm t 0 {T-1} \hat J_t(w_t) + \f{ 4L^2 \log(1/\delta)}T.
\end{equation}
This completes the proof.
\end{proof}
  Next, we show that the average of $\{\ell(y_t v^\top x_t )\}$ is at most twice its mean at rate $O(1/T)$. 
\begin{lemma}\label{lemma:sgd.optimal.halfspace.concentration}
Let $\ell$ be any $L$-Lipschitz function, and suppose that $\ell(0) \leq 1$ and $\pnorm{x}2\leq B$ a.s.  Let $v\in \R^d$ be arbitrary with $\norm{v}\leq V$.  For any $\delta>0$, with probability at least $1-\delta$,
\[ \f 1 T \summm t 0 {T-1} \hat F_t(v) \leq 2F(v) + \f{2(1+LVB_X)\log(1/\delta)}{T}.\]
\end{lemma}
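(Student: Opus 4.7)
The plan is to apply the martingale Bernstein inequality of Lemma \ref{lemma:beygelzimer} to the centered partial sums
\[ Y_t := \sum_{\tau < t}\bigl(\hat F_\tau(v) - F(v)\bigr), \qquad D_\tau := \hat F_\tau(v) - F(v),\]
adapted to the filtration $\{\calG_t\}$. Since $v$ is a deterministic vector, $\{\hat F_t(v)\}$ is actually an i.i.d.\ sequence with mean $F(v)$, and the martingale framework is used purely as a vehicle for the Bernstein bound.

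The first ingredient is an almost-sure upper bound on $D_\tau$. Because $\ell$ is $L$-Lipschitz with $\ell(0)\le 1$, and $\norm{x_\tau}\le B_X$, $\norm v\le V$, we have
\[\hat F_\tau(v) = \ell(y_\tau v^\top x_\tau) \le \ell(0) + L|v^\top x_\tau| \le 1 + LVB_X =: R,\]
and since $\ell\ge 0$ (as used throughout the SGD analysis) we also have $F(v)\ge 0$, so $D_\tau \le R$ a.s.

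The second ingredient is a bound on the conditional variance. Using independence of $\hat F_t(v)$ from $\calG_{t-1}$, together with $\hat F_t(v)\in[0,R]$, we get
\[\E[D_t^2\mid \calG_{t-1}] = \Var\bigl(\hat F_t(v)\bigr) \le \E\bigl[\hat F_t(v)^2\bigr] \le R\cdot \E[\hat F_t(v)] = R\cdot F(v),\]
so the conditional variance $U_{T-1}$ appearing in Lemma \ref{lemma:beygelzimer} is at most $TRF(v)$. Plugging into that lemma yields, with probability at least $1-\delta$,
\[ \sum_{t=0}^{T-1}\bigl(\hat F_t(v)-F(v)\bigr) \le R\log(1/\delta) + (e-2)\,TF(v),\]
and dividing by $T$ gives a bound of the form $\tfrac1T\sum_t \hat F_t(v) \le cF(v) + R\log(1/\delta)/T$ for an absolute constant $c$; absorbing constants into the $\log(1/\delta)/T$ term (using that in the regime $F(v)\lesssim R\log(1/\delta)/T$ the claimed form is immediate, and otherwise $F(v)$ dominates) recovers the stated inequality.

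I do not expect any serious obstacle here: the only nontrivial step is the variance-to-mean comparison $\E[X^2]\le R\,\E[X]$ for $X\in[0,R]$, which is exactly the trick that makes Bernstein useful for non-negative losses, and the almost-sure boundedness holds because $v$ (unlike the iterates $w_t$ treated in Lemma \ref{lemma:sgd.trajectory.concentration}) is a fixed vector of norm at most $V$. The argument closely mirrors the proof of Lemma \ref{lemma:sgd.trajectory.concentration}, except that we now apply Bernstein to $\hat F_t(v)$ itself rather than to $[\ell'(y_t w_t^\top x_t)]^2$, and we need no uniform-in-$t$ control of a moving iterate.
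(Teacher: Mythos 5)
Your proof is correct and essentially identical to the paper's: same martingale $Y_t=\sum_{\tau<t}(\hat F_\tau(v)-F(v))$, same almost-sure bound $D_t\le \hat F_t(v)\le 1+LVB_X$, same variance-to-mean comparison $\E[D_t^2\mid\calG_{t-1}]\le (1+LVB_X)F(v)$, and the same application of Lemma \ref{lemma:beygelzimer}. The one wrinkle you flag honestly --- that Bernstein actually yields a coefficient $e-1$ (not $1$) on $F(v)$, so the stated inequality is not literally recovered by ``absorbing'' constants when $F(v)$ dominates --- is present in the paper's own final step too, and is harmless since the lemma is only ever used to conclude a bound of the form $O(F(v))+O(\log(1/\delta)/T)$.
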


\begin{proof}
Let $\calG_t = \sigma(z_1, \dots, z_t)$ be the $\sigma$-algebra generated by the first $t$ draws from $\calD$.  Then the random variable $Y_t := \sum_{\tau < t} (\hat F_\tau(v) - F(v))$ is a martingale with respect to the filtration $\calG_{t-1}$ with martingale difference sequence $D_t := \hat F_t(v) - F(v)$.  We need bounds on $D_t$ and on $\E[D_t^2 | \calG_{t-1}]$ in order to apply Lemma \ref{lemma:beygelzimer}. 
Since $\ell$ is $L$-Lipschitz and $\norm{x}\leq B_X$ a.s., that $\norm{v}\leq V$ implies that almost surely,
\begin{equation}
D_t \leq \hat F_t(v)= \ell(y_t v^\top x_t) \leq (1 + L V B_X).
\label{eq:sgd.v.Dt.ub}
\end{equation}
Similarly, 
\begin{align}
\nonumber
\E[\hat F_t(v)^2 | \calG_{t-1}] &=  \E\l[ \ell(y_t v^\top x_t)^2 | \calG_{t-1} \r] \\
\nonumber
&\leq ( 1 + L V B_X) \E[\ell(y_tv^\top x_t)] \\
\label{eq:sgd:squaredincrement:Ft}
&= (1 + L VB_X) F(v).
\end{align}
In the inequality, we have used that $(x_t,y_t)$ is independent from $\calG_{t-1}$ together with \eqref{eq:sgd.v.Dt.ub}.  We then can use \eqref{eq:sgd:squaredincrement:Ft} to bound the squared increments,
\begin{align}
\nonumber
\E[D_t^2 | \calG_{t-1}] &= F(v)^2 - 2 F(v) \E[\hat F_t(v) | \calG_{t-1}] + \E [\hat F_t(v)^2 | \calG_{t-1}]\\
\nonumber
&\leq \E[\hat F_t(v)^2 | \calG_{t-1}]\\
\nonumber
\label{eq:sgd:squaredincrement:Dt}
&\leq (1 + L VB_X) F(v).
\end{align}
This allows for us to bound
\[ U_{T-1} := \summm t 0 {T-1} \E[D_t^2 | \calG_{t-1}] \leq (1 + L VB_X) T F(v).\]
Lemma \ref{lemma:beygelzimer} thus implies that with probability at least $1-\delta$, we have
\begin{align*}
\summm t 0 {T-1} ( \hat F_t(v) - F(v)) \leq (1 + L VB_X) \log(1/\delta) +  (\exp(1)-2) T F(v).
\end{align*}
Using that $\exp(1)-2\leq 1$, we divide each side by $T$ and get
\begin{equation}\nonumber
 \f 1 T \summm t 0 {T-1} \hat F_t(v) \leq 2F(v) + \f{ 2(1 + L VB_X) \log(1/\delta)}T.
\end{equation}
\end{proof}

 Finally, we put these ingredients together for the proof of Theorem \ref{thm:online.sgd}.
\begin{proof}
Since $\ell$ is convex and $H$-smooth, we can take $\alpha = 1/4$ in Lemma \ref{lemma:sgd.empirical.error.bound} to get
\begin{equation}\label{eq:sgd.empirical.error.bound.2}
    \f 1 T \summm t 0 {T-1} \hat F_t(w_t) \leq \f 5 {4T} \summm t 0 {T-1} \hat F_t(v) + \f{ V^2}{\eta T}.
\end{equation}
We can therefore bound 
\begin{align}\nonumber
    \min_{t<T} \E\l([\ell'(y w_t^\top x)]^2\r) &\leq \f 1 T \summm t 0 {T-1} \E_{(x,y)\sim \calD} \l([\ell'(y w_t^\top x)]^2\r) \\
    \nonumber
    &\leq \f 4 T \summm t 0 {T-1} [\ell'(y_t w_t^\top x_t)]^2 + \f{4L^2 \log(2/\delta)}T \\
    \nonumber
    &\leq \f {16 H} T \summm t 0 {T-1} \hat F_t(w_t) + \f{ 4 L^2 \log(2/\delta)}T\\
    \nonumber
    &\leq \f{ 20H }T \summm t 0 {T-1} \hat F_t(v) + \f{ 5L^2 \log(2/\delta) + V^2 }{\eta T} \\
    &\leq 40H F(v) + \f{40H (1+L VB_X) \eta \log(2/\delta) + 5L^2 \eta \log(2/\delta) + V^2}{\eta T}.\label{eq:sgd.final}
\end{align}
The second inequality holds since $\ell$ is $L$-Lipschitz so that we can apply Lemma \ref{lemma:sgd.trajectory.concentration}.  The third inequality uses that $\ell$ is non-negative and $H$-smooth, so that $[\ell'(z)]^2\leq 4H \ell(z)$ (see~\citet[Lemma 2.1]{srebro2010smoothness}).  The fourth inequality uses \eqref{eq:sgd.empirical.error.bound.2}, and the final inequality uses Lemma \ref{lemma:sgd.optimal.halfspace.concentration}. 

Since $\ell$ is convex and decreasing, $\ddx z \ell'(z)^2 = 2 \ell'(z) \ell''(z)\leq 0$, so $\ell'(z)^2$ is decreasing.  By Markov's inequality, this implies
\[ \P(y w_t^\top x< 0) = \P\l([\ell'(y w_t^\top x)]^2 \geq \ell'(0)^2\r) \leq  [\ell'(0)]^{-2}\E\l([\ell'(y w_t^\top x)]^2\r).\]
Substituting this into \eqref{eq:sgd.final}, this implies that with probability at least $1-\delta$, 
\[ \zeroone(w_t) \leq O(F(v)) + O(V^2 \log(1/\delta) / T).\]
\end{proof}
We note that the above proof works for an arbitrary initialization $w_0$ such that $\norm{w_0}$ is bounded by an absolute constant with high probability, e.g. with the random initialization $w_0 \iid N(0,I_d / d)$.  The only difference is that we need to replace $V^2$ with $\norm{w_0-v}^2 \leq O(V^2)$ in \eqref{eq:sgd.empirical.error.bound.2} and the subsequent lines.

\section{Soft Margin for Uniform Distribution}\label{appendix:soft.margin.uniform}  
We show here that the soft margin function for the uniform distribution on the sphere has an unavoidable dimension dependence.  Consider $x\sim \calD$ is uniform on the sphere in $d$ dimensions.  Then $x$ has the same distribution as $z/\norm{z}$, where $z\sim N(0,I_d)$ is the $d$-dimensional Gaussian.  The soft margin function on $x$ thus satisfies, for $\norm v = 1$,
\begin{align*}
    \phi(\gamma) = \P_{x}(|v^\top x| \leq \gamma) &=  \P_z\l( |v^\top z|^2/\norm{z}^2 \leq \gamma^2 \r).
\end{align*}
By symmetry, we can rotate the coordinate system so that $v = (1, 0, \dots)$, which results in $\phi(\gamma)$ taking the form
\begin{align*}
 \P \l( \f{ z_1^2}{\summ i d z_i^2} \leq \gamma^2\r)
 &= \P\l((1-\gamma^2) z_1^2 \leq \gamma^2 \textstyle{\summm i 2 d z_i^2}\r)\\
 &= \P\l(z_1^2 \leq \f{ \gamma^2}{1-\gamma^2} \textstyle\summm i 2 d z_i^2 \r)\\
 &\geq \P(z_1^2 \leq \gamma^2 \textstyle{\sum}_{i=2}^d z_i^2).
\end{align*}
Since $\gamma^2 \summm i 2 d z_i^2 = \Theta(\gamma^2 d)$ with high probability by concentration of the $\chi^2$ distribution, and since $\P(|z_1| \leq a) = \Theta(a)$ for the Gaussian, this shows that $\phi(\gamma) = \Omega(\gamma \sqrt d)$ when $\calD_x$ is uniform on the sphere.  Thus our approach of using the soft margin in Theorem \ref{thm:bounded} to derive generalization bounds will result in multiplicative terms attached to $\opt$ that will grow with $d$ for such a distribution.  

\section{Proofs for Unbounded Distributions}\label{appendix:unbounded}
In this section we prove Theorem \ref{thm:unbounded}. 
\subsection{Empirical Risk}\label{appendix:sgd.empirical}
First, we derive an analogue of Lemma \ref{lemma:linear.classif.empirical.risk.bound} that holds for any distribution satisfying $\E[\norm{x}^2]\leq B_X^2$ by appealing to online stochastic gradient descent.  Note that any distribution over $\R^d$ with sub-Gaussian coordinates satisfies $\E[\norm{x}^2]\leq B^2$ for some $B\in \R$.

We use the same notation from Section \ref{appendix:sgd.fast.rate}, where we assume samples $z_t=(x_t,y_t)\iid \calD$ for $t=1,\dots, T$, and $\calG_t := \sigma(z_1, \dots, z_t)$, and denote
\[ \hat F_t(w) := \ell(y_t w^\top x_t),\quad \E[\hat F_t(w_t) | \calG_{t-1}] = F(w_t) = \E_{(x,y)\sim \calD} \ell(y w_t^\top x).\]
The online stochastic gradient descent updates take the form
\[ w_{t+1} := w_t - \eta \nabla \hat F_t(w_t).\]
\begin{lemma}\label{lemma:sgd.subgaussian}
Suppose $\E_{\calD_x}[\norm{x}^2]\leq B_X^2$.  Suppose that $\ell$ is convex and $L$-Lipschitz.  Let $v\in \R^d$ and $\eps, \alpha\in (0,1)$ be arbitrary, and consider any initialization $w_0\in \R^d$.  Provided $\eta \leq L^{-2} B_X^{-2} \eps/2$, then for any $T\in \N$,
\[ \f 1 T \summm t 0 {T-1} \E F(w_t) \leq  F(v) + \f{ \norm{w_0-v}^2}{\eta T} + \eps.\]
\end{lemma}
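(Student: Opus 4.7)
The plan is to adapt the standard online convex optimization regret bound to the unbounded setting by working with expectations throughout rather than almost-sure bounds. Starting from the SGD update $w_{t+1} = w_t - \eta \nabla \hat F_t(w_t)$, I would expand
\[\|w_{t+1} - v\|^2 = \|w_t - v\|^2 - 2\eta \langle \nabla \hat F_t(w_t),\, w_t - v\rangle + \eta^2 \|\nabla \hat F_t(w_t)\|^2,\]
invoke convexity of $\hat F_t$ to get $\hat F_t(w_t) - \hat F_t(v) \leq \langle \nabla \hat F_t(w_t),\, w_t - v\rangle$, and rearrange into a per-step inequality
\[\hat F_t(w_t) - \hat F_t(v) \leq \frac{\|w_t-v\|^2 - \|w_{t+1}-v\|^2}{2\eta} + \frac{\eta}{2}\|\nabla \hat F_t(w_t)\|^2.\]

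Summing over $t = 0, \ldots, T-1$ telescopes the squared-distance terms, giving
\[\frac{1}{T}\sum_{t=0}^{T-1}[\hat F_t(w_t) - \hat F_t(v)] \leq \frac{\|w_0-v\|^2}{2\eta T} + \frac{\eta}{2T}\sum_{t=0}^{T-1}\|\nabla \hat F_t(w_t)\|^2.\]
Now I would take expectations on both sides. On the left, the tower property combined with the fact that $w_t$ is $\calG_{t-1}$-measurable gives $\E[\hat F_t(w_t)] = \E[F(w_t)]$, while $\E[\hat F_t(v)] = F(v)$ for the fixed comparator $v$. On the right, since $\nabla \hat F_t(w) = \ell'(y_t w^\top x_t)\, y_t x_t$ and $\ell$ is $L$-Lipschitz, we have $\|\nabla \hat F_t(w_t)\|^2 \leq L^2 \|x_t\|^2$, so conditioning on $\calG_{t-1}$ (under which $x_t$ is independent) and using $\E\|x_t\|^2 \leq B_X^2$ yields $\E\|\nabla \hat F_t(w_t)\|^2 \leq L^2 B_X^2$.

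Finally, combining these bounds and using the hypothesis $\eta \leq L^{-2} B_X^{-2} \eps / 2$ so that $\eta L^2 B_X^2 / 2 \leq \eps/4 \leq \eps$ delivers the claimed inequality (up to constant factors, which can be tightened by the standard computation).

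The main conceptual difference from the bounded case is worth flagging: in Lemma \ref{lemma:linear.classif.empirical.risk.bound} the bound $\|x\|\leq B_X$ held almost surely, so the gradient-squared term could be controlled pointwise. Here $\|x\|$ is unbounded and only $\E\|x\|^2$ is controlled, so the gradient-squared term is unbounded almost surely but has bounded conditional expectation. This forces us to state the conclusion in expectation rather than with high probability, which is precisely why Theorem \ref{thm:unbounded} is formulated for online SGD; a high-probability analogue would require additional concentration arguments (e.g., along the lines of \citet{zhang2019relu}) to control trajectory norms under an unbounded distribution.
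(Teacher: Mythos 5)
Your proposal is correct and follows essentially the same argument as the paper's proof: expand the squared distance, use convexity of $\hat F_t$, bound the conditional expectation of the gradient norm by $L^2 B_X^2$ using the Lipschitz property and $\E[\norm{x}^2]\leq B_X^2$, then telescope and take expectations (the paper conditions on $\calG_{t-1}$ per step before summing, while you sum first and then take expectations; this is immaterial). Your concluding remark about why the unbounded setting forces an in-expectation statement also matches the paper's discussion.
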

\begin{proof}
The proof is very similar to that of the proof of Lemma \ref{lemma:sgd.empirical.error.bound} described in Appendix \ref{appendix:sgd.empirical}, so we describe here the main modifications.  The key difference comes from the gradient upper bound: for $g_t = \ell'(y_t w_t^\top x_t)$, instead of getting an upper bound that holds a.s. in terms of the loss, we only show that its expectation is bounded by a constant:
\begin{equation}\nonumber
\E[\norm{g_t}^2| \calG_{t-1}] \leq  \E[\ell'(y_t w_t x_t)^2 \norm{x_t}^2 | \calG_{t-1}] \leq L^2 \E[\norm{x_t}^2|\calG_{t-1}] \leq L^2 B_X^2.
    \label{eq:subgaussian.grad.ub}
\end{equation}
By convexity, $\ip{g_t}{w_t-v} \geq \hat F_t(w_t) - \hat F_t(v)$.  Thus taking $\eta = O(\eps)$, we get
\begin{align}\nonumber
  \norm{w_t-v}^2 - \E[\norm{w_{t+1}-v}^2|\calG_{t-1}] &\geq \E[ 2 \eta(\hat F_t(w_t) - \hat F_t(v)) - \eta^2 \norm{g_t}^2 | \calG_{t-1}] \\
  \nonumber
  &\geq 2 \eta( F(w_t) - F(v)) - \eta^2 L^2 B_X^2 \\
  \nonumber
  &\geq 2 \eta( F(w_t) - F(v) - \eps).
\end{align}
Taking expectations with respect to the randomness of SGD and summing from $0$ to $T-1$, we get
\[\f 1 T \summm t 0 {T-1} \E F(w_t) \leq F(v) + \f{\norm{w_0-v}^2}{\eta T} + \eps.\]
\end{proof}
We note that the above analysis is quite loose and we are aware of a number of ways to achieve faster rates by introducing various assumptions on $\ell$ and $\calD_x$; we chose the presentation above for simplicity.

With the above result in hand, we can prove Theorem \ref{thm:unbounded}.
\begin{proof}
Let $\eps_1>0$.  By taking $\eta \leq L^{-2} B_X^{-2} \eps_1 / 8$ and $T = 2 V^2 \eta^{-1} \eps_1^{-1}$, Lemma \ref{lemma:sgd.subgaussian} and Markov's inequality, this implies that there exists some $t<T$ such that
\[ \E [\zeroone(w_{t})] \leq \E[ F(w_t)] \leq \nicefrac{1}{\ell(0)} \Big[ F(v) + \f{V^2}{\eta T} + \eps_1/2 \leq F(v) + \eps_1\Big].\]
By Lemma \ref{lemma:surrogate.ub.by.zeroone.unbounded}, this implies that for any $\gamma>0$,
\[ \E [\zeroone(w_{t})] \leq \nicefrac 1 {\ell(0)} \Big[ \l(1 + C_m + L V C_m \log(\nicefrac 1 \optz) \r) \optz + \phi(\gamma) + \ell(V \gamma) + \eps_1\Big] .\]
For $\eps_2\geq 0$, by taking $V = \gamma^{-1} \ell^{-1}(\eps_2)$, this means that for any $\gamma > 0$, we have
\[ \E [\zeroone(w_{t})] \leq \nicefrac 1 {\ell(0)} \Big[ \l(1 + C_m + L C_m \ell^{-1}(\eps_2) \gamma^{-1} \log(\nicefrac 1 \optz) \r) \optz + \phi(\gamma)+ \eps_1 + \eps_2\Big] .\]
For $V= \gamma^{-1} \ell^{-1}(\eps_2)$, we need $T = 2 \gamma^{-2} \eta^{-1} \eps_1^{-1} [\ell^{-1}(\eps_2)]^{2}$.

\end{proof}

\section{Loss Functions and Sample Complexity for Separable Data}\label{appendix:sample.complexity.linearly.separable}
We present here the proof of Corollary \ref{corollary:linear.obj.linearly.separable}.
\begin{proof}
Let $v = V \bar v$.  By Theorem \ref{thm:linear.classif.obj}, for any $\eps,\delta>0$ and $V >0$, running gradient descent for $T = 4[\ell(0)]^{-1}\eta^{-1} V^2 \eps^{-1}$ iterations guarantees that $w = w_{T-1}$ satisfies
\begin{equation}\nonumber
\obj(w) \leq \obj(v) + \ell(0)\cdot \eps/3 + CV n^{-1/2},
\end{equation}
for some absolute constant $C>0$ depending only on $L$, $B_X$, and $\log(1/\delta)$.  By Markov's inequality, this implies
\begin{align}
\P(y w^\top x < 0) \leq \f 1 {\ell(0)} \obj(w) \leq \f 1 {\ell(0)} \l( \obj(v) + \f{\ell(0)}3\eps + CV n^{-1/2}\r).
\label{eq:linear.sep.zeroone}
\end{align}
Since $y \bar v^\top x \geq \gamma$ a.s., we have
\begin{align*}
    \obj(v) &= \E_{(x,y)\sim \calD} \ell(y V \bar v^\top x) \leq \ell(V \gamma).
\end{align*}
If $\ell$ has polynomial tails, then by taking $V \geq \gamma^{-1}(6C_0 [\ell(0)]^{-1} \eps^{-1})^{1/p}$ we get $\obj(v) \leq C_0 (\gamma V)^{-p} \leq \f{\ell(0)\eps}6$.  
Substituting this into \eqref{eq:linear.sep.zeroone}, this implies 
\begin{equation}
    \P(y w^\top x < 0) \leq \f{\eps}2 + \f{ C V}{\ell(0) n^{1/2}}.
    \label{eq:lin.sep.zeroone}
\end{equation}
Thus, provided $n = \Omega(\gamma^{-2} \eps^{-2 - \f{2}{p}})$, if we run gradient descent for $T = \tilde \Omega(\gamma^{-2} \eps^{-1 - \f{2}{p}})$ iterations, we have that $\zeroone(w) \leq \eps$. 

If $\ell$ has exponential tails, then by taking $V \geq \gamma^{-1} [C_1^{-1} \log(6C_0 \ell(0) \eps^{-1})]^{1/p}$ we get $\obj(v) \leq \f{\ell(0) \eps}6$, and so \eqref{eq:lin.sep.zeroone} holds in this case as well.  This shows that for exponential tails, taking $n = \tilde \Omega(\gamma^{-2} \eps^{-2})$ and $T = \tilde \Omega(\gamma^{-2} \eps^{-1})$ suffices to achieve $\zeroone(w) \leq \eps$.  
\end{proof}

\section{Remaining Proofs}\label{appendix:empirical.risk}
In this section we provide the proof of Theorem \ref{thm:linear.classif.obj}.   We first will prove the following bound on the empirical risk.

\begin{lemma}\label{lemma:linear.classif.empirical.risk.bound}
Suppose that $\ell$ is convex and $\llp$-smooth. Assume $\norm{x}\leq B_X$ a.s.  Fix a step size $\eta \leq (2/5) \llp^{-1} B_X^{-2}$, and let $v\in \R^d$ be arbitrary.  Then for any initialization $w_0$, and for any $\eps>0$, running gradient descent for $T = (4/3) \eps^{-1} \eta^{-1} \norm{w_0-v}^2$ ensures that for all $t<T$, $\norm{w_t-v} \leq \norm{w_0-v}$, and
\begin{equation}\nonumber
  \hat \obj(w_{T-1}) \leq \f 1 T \summm t0 {T-1} \hat \obj(w_t) \leq \hat \obj(v) + \eps.
\end{equation}
\end{lemma}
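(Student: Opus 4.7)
The plan is the standard smoothness-plus-convexity analysis of gradient descent: a descent lemma gives monotone decrease of $\hat\obj(w_t)$, convexity converts it into a Fej\'er-monotone estimate for $\norm{w_t-v}^2$, and telescoping delivers the averaged risk bound.

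\emph{Smoothness and descent.} Since $\ell$ is $\llp$-smooth and $\norm{x}\leq B_X$ a.s., the map $w\mapsto \ell(y_i w^\top x_i)$ is $\llp B_X^2$-smooth, so $\hat\obj$ is $\llp B_X^2$-smooth. The standard descent lemma applied to the update $w_{t+1}=w_t-\eta\nabla\hat\obj(w_t)$ yields
\[
\hat\obj(w_{t+1}) \leq \hat\obj(w_t) - \eta\l(1 - \tfrac{\eta \llp B_X^2}{2}\r)\norm{\nabla\hat\obj(w_t)}^2.
\]
Under $\eta \leq (2/5)\llp^{-1}B_X^{-2}$ the coefficient on the right is at least $4\eta/5$, so $\hat\obj(w_t)$ is non-increasing along the trajectory and, via $\eta \llp B_X^2 \le 1$, one gets $\eta^2\norm{\nabla\hat\obj(w_t)}^2 \leq 2\eta(\hat\obj(w_t)-\hat\obj(w_{t+1}))$.

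\emph{Fej\'er-monotone estimate.} Expanding $\norm{w_{t+1}-v}^2$, bounding $-\langle\nabla\hat\obj(w_t), w_t-v\rangle \leq -(\hat\obj(w_t)-\hat\obj(v))$ by convexity, and substituting the descent bound for $\eta^2\norm{\nabla\hat\obj(w_t)}^2$, I would obtain
\[
\norm{w_{t+1}-v}^2 \leq \norm{w_t-v}^2 - 2\eta\l(\hat\obj(w_{t+1})-\hat\obj(v)\r).
\]
Whenever $\hat\obj(w_{t+1})\geq \hat\obj(v)$ this gives $\norm{w_{t+1}-v}\leq \norm{w_t-v}$, and induction yields $\norm{w_t-v}\leq\norm{w_0-v}$ for all $t<T$. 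In the alternative case, where some iterate $w_{t^*}$ already has $\hat\obj(w_{t^*})<\hat\obj(v)$, the descent monotonicity automatically delivers $\hat\obj(w_{T-1})\leq \hat\obj(w_{t^*}) < \hat\obj(v)+\eps$, so the risk conclusion is in hand at once.

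\emph{Telescope and conclude.} Summing the Fej\'er inequality from $t=0$ to $T-1$ and discarding $\norm{w_T-v}^2\geq 0$ gives
\[
2\eta\summm t 1 T \l(\hat\obj(w_t)-\hat\obj(v)\r) \leq \norm{w_0-v}^2.
\]
Dividing by $T$ and invoking monotonicity, so that $\hat\obj(w_{T-1})$ is the minimum of $\{\hat\obj(w_0),\ldots,\hat\obj(w_{T-1})\}$ and hence no larger than their arithmetic mean, produces $\hat\obj(w_{T-1})\leq \hat\obj(v) + \norm{w_0-v}^2/(2\eta T)$. The prescribed $T = (4/3)\eta^{-1}\eps^{-1}\norm{w_0-v}^2$ forces the residual below $\eps$, with the slack in the constant $4/3$ absorbing the mild reindexing between $\summm t 1 T$ and $\summm t 0 {T-1}$ in the averaged statement.

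The one point that needs care is precisely the dichotomy above: the clean Fej\'er bound on $\norm{w_t-v}$ only propagates as long as $\hat\obj(w_{t+1})\geq \hat\obj(v)$, but in the opposite case the descent monotonicity of $\hat\obj(w_t)$ already trivializes the risk conclusion, so both halves of the lemma can be established in tandem.
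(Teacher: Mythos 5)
Your proof is correct and follows the same architecture as the paper's: lower-bound $\norm{w_t-v}^2-\norm{w_{t+1}-v}^2$ by combining convexity with an upper bound on $\eta^2\norm{\nabla\hat\obj(w_t)}^2$, telescope, and use monotone descent of $\hat\obj(w_t)$ to pass from the average to $\hat\obj(w_{T-1})$, with the same dichotomy on whether the iterates' risk ever drops below $\hat\obj(v)$. The one substantive difference is the gradient bound: the paper invokes the cited lemma of Shamir, $\norm{\nabla\hat\obj(w_t)}^2\leq\tfrac{1}{\eta(1-\rho)}\l(\hat\obj(w_t)-\hat\obj(v)\r)$ (valid when $\hat\obj(v)\leq\hat\obj(w_T)$), which keeps the telescoped sum indexed by $t=0,\dots,T-1$; you instead use the self-contained descent lemma $\eta^2\norm{\nabla\hat\obj(w_t)}^2\leq 2\eta\l(\hat\obj(w_t)-\hat\obj(w_{t+1})\r)$, which avoids the external lemma and improves the constant but shifts the sum to $t=1,\dots,T$. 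That shift is not quite ``absorbed by the $4/3$'' as you claim: the minimum of $\{\hat\obj(w_1),\dots,\hat\obj(w_T)\}$ is $\hat\obj(w_T)$, not $\hat\obj(w_{T-1})$, so min-at-most-mean applied to your shifted average bounds the wrong iterate, and converting back to the $t=0,\dots,T-1$ average reintroduces a $\hat\obj(w_0)/T$ term. The clean fix is to stop the telescope one step earlier, giving $\hat\obj(w_{T-1})\leq\tfrac{1}{T-1}\sum_{t=1}^{T-1}\hat\obj(w_t)\leq\hat\obj(v)+\norm{w_0-v}^2/(2\eta(T-1))\leq\hat\obj(v)+\eps$ for the prescribed $T$. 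Finally, in the degenerate case where some iterate satisfies $\hat\obj(w_{t^*})<\hat\obj(v)$, your argument establishes only the risk conclusion and not $\norm{w_t-v}\leq\norm{w_0-v}$ for $t>t^*$; the paper's proof has exactly the same blind spot, so this is an inherited imprecision rather than a gap introduced by your approach.
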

To prove this, we first introduce the following upper bound for the norm of the gradient.  
\begin{lemma}[\citet{shamir2020}, Proof of Lemma 3]\label{lemma:sharp.grad.ub}
Suppose that $\ell$ is $\llp$-smooth. Then for any $\rho \in (0,1)$, provided $\eta \leq 2 \rho \llp^{-1} B_X^{-2}$, $\hat \obj(w_t)$ is decreasing in $t$.  Moreover, if $T\in \N$ is arbitrary and $u\in \R^d$ is such that $\hat \obj(u) \leq \hat \obj(w_T)$, then for any $t<T$, we have the following gradient upper bound,  
\begin{equation}
    \norm{\nabla \hat\obj (w_t)}^2 \leq \f 1 {\eta(1-\rho)} \l (\hat \obj(w_t) - \hat \obj(u)\r).\label{eq:sharp.grad.ub}
\end{equation}
\end{lemma}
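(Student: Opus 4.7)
The plan is to reduce this to the standard descent lemma for smooth functions, followed by a self-bounding argument that leverages the hypothesis $\hat\obj(u) \leq \hat\obj(w_T)$ together with the monotonicity of $\hat\obj$ along the gradient descent trajectory.

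First I would verify that $\hat\obj$ is $\llp B_X^2$-smooth as a function of $w$. Each summand $w \mapsto \ell(y_i w^\top x_i)$ has Hessian $\ell''(y_i w^\top x_i)\, x_i x_i^\top$, whose operator norm is bounded by $\llp \norm{x_i}^2 \leq \llp B_X^2$ since $\ell$ is $\llp$-smooth and $\norm{x_i}\leq B_X$ almost surely; averaging the summands preserves this bound. Plugging the gradient descent update $w_{t+1} = w_t - \eta \nabla \hat\obj(w_t)$ into the standard quadratic upper bound from smoothness then gives
\[
\hat\obj(w_{t+1}) \leq \hat\obj(w_t) - \eta\l(1 - \tfrac{\llp B_X^2 \eta}{2}\r)\norm{\nabla \hat\obj(w_t)}^2.
\]
The hypothesis $\eta \leq 2\rho \llp^{-1} B_X^{-2}$ makes the coefficient on the gradient term at least $\eta(1-\rho) > 0$, which simultaneously proves monotonicity of $\hat\obj(w_t)$ in $t$ and yields the one-step descent inequality
\[
\eta(1-\rho)\norm{\nabla \hat\obj(w_t)}^2 \leq \hat\obj(w_t) - \hat\obj(w_{t+1}).
\]

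To extract the claimed gradient bound, I would combine the monotonicity just established with the hypothesis on $u$: for any $t<T$, we have $\hat\obj(w_{t+1}) \geq \hat\obj(w_T) \geq \hat\obj(u)$, so the one-step descent inequality gives
\[
\eta(1-\rho)\norm{\nabla \hat\obj(w_t)}^2 \leq \hat\obj(w_t) - \hat\obj(w_{t+1}) \leq \hat\obj(w_t) - \hat\obj(u),
\]
which rearranges to the desired bound. There is no serious obstacle; the only subtlety worth flagging is that the one-step descent is stated against $\hat\obj(w_{t+1})$ (rather than against a running minimum or an average), which is precisely what allows monotonicity combined with $\hat\obj(u) \leq \hat\obj(w_T)$ to upgrade the bound on $\norm{\nabla \hat\obj(w_t)}^2$ from involving $\hat\obj(w_{t+1})$ to involving $\hat\obj(u)$.
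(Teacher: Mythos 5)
Your proof is correct. The paper does not prove this lemma itself --- it is imported verbatim from \citet{shamir2020} --- and your derivation is exactly the standard argument underlying that result: $\llp B_X^2$-smoothness of $\hat\obj$, the descent lemma with the step-size condition converting the coefficient $1-\llp B_X^2\eta/2$ into $1-\rho$, and then monotonicity plus the hypothesis $\hat\obj(u)\leq \hat\obj(w_T)$ to replace $\hat\obj(w_{t+1})$ by $\hat\obj(u)$ on the right-hand side. The only small point worth tightening: the paper defines $\llp$-smoothness as $\ell'$ being $\llp$-Lipschitz, which does not presuppose a second derivative, so rather than invoking the Hessian $\ell''(y_i w^\top x_i)\,x_i x_i^\top$ you should verify the Lipschitz constant of $\nabla\hat\obj$ directly via $\norm{\nabla\hat\obj(w)-\nabla\hat\obj(w')}\leq \frac1n\sum_i |\ell'(y_i w^\top x_i)-\ell'(y_i w'^\top x_i)|\,\norm{x_i}\leq \llp B_X^2\norm{w-w'}$; the descent lemma then applies unchanged.
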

With this gradient upper bound, we can prove Lemma \ref{lemma:linear.classif.empirical.risk.bound}.
\begin{proof}
Let $\eps>0$ be fixed and let $T = (4/3) \eps^{-1} \eta^{-1} \norm{w_0-v}^2$ be as in the statement of the lemma.  We are done if $\hat \obj(w_T) < \hat \obj(v)$, so let us assume that $\hat \obj(v) \leq \hat \obj(w_T)$.  We proceed by providing the appropriate lower bounds for
\begin{equation}\nonumber
    \norm{w_t-v}^2 -\norm{w_{t+1}-v}^2 = 2 \eta \ip{\hat \obj(w_t)}{w_t-v} - \eta^2 \norm{\hat \obj(w_t)}^2.
\end{equation}
For any $v\in \R^d$, by convexity of $\ell$,
\begin{align}\nonumber
\ip{\nabla \hat \obj(w)}{w-v} &= \f 1n\summ in \ell'(y_i w^\top x_i) (y_iw^\top x_i - y_iv^\top x_i)\\
\nonumber
&\geq \f 1 n \summ i n[ \ell(y_i w^\top x_i) - \ell(y_i v^\top x_i) ]\\
&= \hat \obj(w) - \hat \obj(v),
\label{eq:linear.classification.ip.lb}
\end{align}
by convexity of $\ell$.  On the other hand, since $\hat \obj(v) \leq \hat \obj(w_T)$, by Lemma \ref{lemma:sharp.grad.ub}, for any $t<T$, \eqref{eq:sharp.grad.ub} holds, i.e.
\begin{equation}
    \norm{\nabla \hat\obj (w_t)}^2 \leq \f 1 {\eta(1-\rho)} \l (\hat \obj(w_t) - \hat \obj(v)\r).
    \label{eq:sharp.grad.ub.2}
\end{equation}
Thus, for $\eta \leq (2/5) \llp^{-1} B_X^{-2}$, putting eqs. \eqref{eq:linear.classification.ip.lb} and \eqref{eq:sharp.grad.ub.2} together yields
\begin{align}\nonumber
\norm{w_t-v}^2 - \norm{w_{t+1}-v}^2 &= 2 \eta\ip{\nabla \hat \obj(w_t)}{w_t-v} - \eta^2 \norm{\nabla \hat \obj(w_t)}^2\\
\nonumber
&\geq 2\eta(\hat \obj(w_t) - \hat \obj(v)) - \eta^2 \cdot \f{1}{\eta (1-1/5)} \l( \hat \obj(w_t) -\hat \obj(v)\r)\\
\nonumber
&= \f 3 4 \eta \l( \hat \obj(w_t) - \hat \obj(v)\r).\label{eq:linear.classification.decomp}
\end{align}
Summing and teloscoping over $t<T$, 
\[ \f 1 T \summm t0{T-1} \hat \obj(w_t) \leq \hat \obj(v) + \f{(4/3) \norm{w_0-v}^2}{\eta T} \leq \hat \obj(v) + \eps.\]
By Lemma \ref{lemma:sharp.grad.ub}, $\hat \obj(w_t)$ is decreasing in $t$, and therefore 
\[ \hat \obj(w_{T-1})= \min_{t<T} \hat \obj(w_t) \leq T^{-1} \sum_{t<T} \hat \obj(w_t),\]
completing the proof.
\end{proof}

Lemma \ref{lemma:linear.classif.empirical.risk.bound} shows that throughout the trajectory of gradient descent, $\norm{w_t}$ stays bounded by the norm of the reference vector $v$.  We can thus use Rademacher complexity bounds to prove Theorem \ref{thm:linear.classif.obj}.

\begin{proof}
By Lemma \ref{lemma:linear.classif.empirical.risk.bound}, it suffices to show that the gap between the empirical and population surrogate risk is small.  To do so, we use a Rademacher complexity argument. Denote by $\calG$ the function class
\[ \calG_V := \{ x \mapsto w^\top x : \norm{w} \leq 3 V\}.\]
Since $\ell$ is $L$-Lipschitz and $\ell(0)\leq 1$, it holds that $\ell(y w^\top x)\leq 1 + 3LV\leq 4LV$.  We therefore use standard results in Rademacher complexity (e.g. Theorem 26.12 of~\citeauthor{shalevshwartz},~\citeyear{shalevshwartz}) to get that with probability at least $1-\delta$, for any $w\in \calG_V$,
\begin{equation}\nonumber
\obj(w) \leq \hat \obj(w) + \f{ 2 B_X V L}{\sqrt n} + 4 B_X V \sqrt{\f{ 2 \log(2/\delta)}{n}}.
\end{equation}
Since the output of gradient descent satisfies $\norm{w_{T-1} - v} \leq \norm{w_0-v} \leq 2 V$, we see that $w_{T-1}\in \calG_V$.  We can thus apply the Rademacher complexity bound to both $w_{T-1}\in \calG_V$ and $v\in \calG_V$, proving the theorem. 
\end{proof}

\bibliographystyle{ims}

\bibliography{references}

\end{document}